\newcommand{\nn}{\nonumber\\}
\newcommand{\tmmax}[2]{\max_{#1}^{(#2)}}
\newcommand{\dmmax}[2]{{\max_{#1}}^{(#2)}}
\newlength{\subfigwidth}
\newlength{\subfigcolsep}
\newtheorem{theorem}{Theorem}
\newtheorem{lemma}[theorem]{Lemma}
\newtheorem{fact}[theorem]{Fact}
\newcommand{\Prob}{\mathrm{Pr}}
\newcommand{\Expect}{\mathbb{E}}
\newcommand{\Indicator}{{\mathbf{1}}}
\newcommand{\Ind}{\Indicator}
\newcommand{\thetai}{\theta_{i}}
\newcommand{\Beta}{\mathrm{Beta}}
\newcommand{\hatmu}{\hat{\mu}}
\newcommand{\opts}{{[L]}}
\newcommand{\subopts}{{[K] \setminus [L]}}
\newcommand{\lisubopts}{{[K] \setminus ([L-1] \cup \{i\})}}
\newcommand{\Regret}{\mathrm{Reg}}
\newcommand{\selected}{{I(t)}}
\newcommand{\Bernoulli}{\mathrm{Bernoulli}}
\newcommand{\EA}{\mathcal{A}}
\newcommand{\EB}{\mathcal{B}}
\newcommand{\EC}{\mathcal{C}}
\newcommand{\ED}{\mathcal{D}}
\newcommand{\EE}{\mathcal{E}}
\newcommand{\ES}{\mathcal{S}}
\newcommand{\ET}{\mathcal{T}}
\newcommand{\EU}{\mathcal{U}}
\newcommand{\thetaijss}{\theta_{\backslash i,j}^{**}}
\newcommand{\thetaijsst}{\theta_{\backslash i,j}^{**}(t)}
\newcommand{\epsilononed}{\delta}
\newcommand{\epsilonOne}{\epsilon_1}
\newcommand{\tilmu}{\theta}
\newcommand{\tilmuijsst}{\tilmu_{\backslash i,j}^{**}(t)}
\newcommand{\e}{\mathrm{e}}
\icmltitlerunning{Optimal Regret Analysis of Thompson Sampling in Stochastic Multi-armed Bandit Problem with Multiple Plays}
\begin{document} 

\twocolumn[
\icmltitle{Optimal Regret Analysis of Thompson Sampling in Stochastic Multi-armed Bandit Problem with Multiple Plays}

\icmlauthor{Junpei Komiyama}{junpei@komiyama.info}
\icmlauthor{Junya Honda}{honda@stat.t.u-tokyo.ac.jp}
\icmlauthor{Hiroshi Nakagawa}{nakagawa@dl.itc.u-tokyo.ac.jp}
\icmladdress{The University of Tokyo, Japan}

\icmlkeywords{Multi-armed bandit problem, Stochastic bandit problem, Online learning}

\vskip 0.3in
]

\begin{abstract} 
We discuss a multiple-play multi-armed bandit (MAB) problem in which several arms are selected at each round.
Recently, Thompson sampling (TS), a randomized algorithm with a Bayesian spirit, has attracted much attention for its empirically excellent performance, and it is revealed to have an optimal regret bound in the standard single-play MAB problem. 
In this paper, we propose the multiple-play Thompson sampling (MP-TS) algorithm, an extension of TS to the multiple-play MAB problem, and discuss its regret analysis.
We prove that MP-TS for binary rewards has the optimal regret upper bound that matches the regret lower bound provided by  Anantharam et al.\,(1987). Therefore, MP-TS is the first computationally efficient algorithm with optimal regret. 
A set of computer simulations was also conducted, which compared MP-TS with state-of-the-art algorithms. We also propose a modification of MP-TS, which is shown to have better empirical performance.
\end{abstract} 

\section{Introduction}

The multi-armed bandit (MAB) problem is one of the most well-known instances of sequential decision-making problems in uncertain environments, which can model many real-world scenarios.
The problem involves conceptual entities called arms.
At each round, the forecaster draws one of $K$ arms and receives a corresponding reward.
The aim of the forecaster is to maximize the cumulative reward over rounds,
 and the forecaster's performance is usually measured by a regret, which is the gap between his or her cumulative reward and that of an optimal drawing policy.
Throughout the rounds, the forecaster faces an
``exploration vs.\;exploitation'' dilemma. 
On one hand, the forecaster wants to exploit the information that he or she has gathered up to the previous round by selecting  seemingly good arms.
On the other hand, there is always a possibility that the other arms have been underestimated,
which motivates him or her to explore seemingly bad arms
in order to gather their information.
To resolve this dilemma, the forecaster uses an algorithm to control the number of draws for each arm. 

In the stochastic MAB problem, which is the most widely studied version of the MAB problem, it is assumed that each arm is associated with a distinct probability distribution.
While there have been many theoretical studies on the infinite setting in which future rewards are geometrically discounted (e.g., the Gittins index \cite{git74}), recent availability of massive data has led to a finite horizon setting in which every reward has the same importance. In this work, we focus on the latter setting. 

There has been significant progress in this setting of the MAB problem.
In particular, the upper confidence bound (UCB) algorithm \cite{auerfinite}
has been widely used and studied for its computational simplicity and customizability.
Whereas the coefficient of the leading logarithmic term in UCB is larger than the theoretical lower bound given by \citet{LaiRobbins1985},
algorithms have been proposed that achieve this bound, such as DMED \cite{HondaDMED}, $\mathcal{K}_\text{inf}$, and KL-UCB \cite{klucb2}.

Moreover, Thompson sampling (TS) \cite{thompsonsampling} has recently attracted attention
for its excellent performance
\cite{scottmodern,empiricalthompson}
and it has been revealed to be applicable to even a wider class of problems
\cite{DBLP:conf/icml/AgrawalG13,DBLP:conf/nips/RussoR13,DBLP:conf/nips/OsbandRR13,DBLP:conf/aaai/KocakVMA14,DBLP:conf/colt/GuhaM14}.
Thompson sampling is 
an old heuristic that has a spirit of Bayesian inference
and selects an arm based on posterior samples of the expectation of each arm.
It has been shown that TS has an optimal regret bound \cite{DBLP:journals/jmlr/AgrawalG12,kaufmannthompson,shiprafurther}.

\subsection{Multiple-play MAB problem}
The literature mentioned above has specifically dealt with the MAB problem in which a single arm is selected and drawn at each round. Let us call this problem single-play MAB (SP-MAB). While the SP-MAB problem is indisputably important as a canonical problem, in many practical situations multiple entities corresponding to arms are selected at each round.
We call the MAB problem in which several arms can be selected multiple-play MAB (MP-MAB).
Examples of the situations that can be modeled as an MP-MAB problem include the followings.
\begin{itemize}
\vspace{-0.5em}
 \item \textbf{Example 1 (placement of online advertisements):} a web site has several slots where advertisements can be placed. Based on each user's query, there is a set of candidates of relevant advertisements from which web sites can select to display. The effectiveness of advertisements varies: some advertisements are more appealing to the user than others. With the standard model in online advertising, it is assumed that each advertisement is associated with a click-through-rate (CTR), which is the number of clicks per view. Since web sites receive revenue from clicks on advertisements, it is natural to maximize it, which can be considered as an instance of an MP-MAB problem in which advertisements and clicks correspond to arms and rewards, respectively.  
\vspace{-0.5em}
 \item \textbf{Example 2 (channel selection in cognitive radio networks \cite{DBLP:conf/infocom/HuangLD08}):} a cognitive radio is an adaptive scheme for allocating channels, such as wireless network spectrums. There are two kinds of users: primary and secondary. Unlike primary users, secondary users do not have primary access to a channel but can take advantage of the vacancies in primary access and opportunistically exploit instantaneous spectrum availability when primary users are idle. However, the availabilities of channels are not easily known. Usually, secondary users have access to multiple channels. They can enhance their communication efficiency by adaptively estimating the availability statistics of the channels, which can be considered as an MP-MAB problem in which channels and the permission of communication are arms and rewards, respectively.
\vspace{-0.5em}
\end{itemize}

There have been several studies on the MP-MAB problem. \citet{anantharam1987asymptotically} derived an asymptotic lower bound on the regret for this problem and proposed an algorithm to achieve this bound. Because their algorithm requires certain statistics that are difficult to compute, efficiently computable MP-MAB algorithms have also been extensively studied. \citet{weichencmab} extended a UCB-based algorithm to a multiple-play case with combinatorial rewards and \citet{gopalancomplex} extended TS to a wide class of problems. Although both papers provide a logarithmic regret bound, the constant factors of these regret bounds do not match the lower bound. Therefore, it is unknown whether the optimal regret bound for the MP-MAB problem is achievable by using a computationally efficient algorithm.

The main difficulty in analyzing the MP-MAB problem lies in the fact that
the regret depends on the combinatorial structure of arm draws.
More specifically,
an algorithm with the optimal bound on the number of draws of suboptimal arms
does not always ensure the optimal regret bound unlike the SP-MAB problem.

\textbf{Contribution:} Our contributions are as follows. 
\vspace{-1em}
\begin{itemize}
  \item \textbf{TS-based algorithm for the MP-MAB problem and its optimal regret bound:}
the first and main contribution of this paper is an extension of TS to the multiple play case, which we call MP-TS. We prove that MP-TS for binary rewards achieves an optimal regret bound. To the best of our knowledge, this paper is the first to provide a computationally efficient algorithm in the MP-MAB problem with the optimal regret bound by \citet{anantharam1987asymptotically}.
\vspace{-0.5em}
\item \textbf{Novel analysis technique:}
to solve the difficulty in the combinatorial structure of the MP-MAB problem,
we show that the independence of posterior samples among arms in TS is a key property
 for suppressing the number of simultaneous draws of several suboptimal arms,
 and the use of this property eventually leads to the optimal regret bound.
\vspace{-0.5em}
 \item \textbf{Experimental comparison among MP-MAB algorithms:} we compare MP-TS with other algorithms, and confirm its efficiency. We also propose an empirical improvement of MP-TS (IMP-TS) motivated by analyses on the regret structure of the MP-MAB problem. We confirm that IMP-TS improves the performance of MP-TS without increasing computational complexity.
\end{itemize}
\vspace{-1em}

\section{Problem Setup}
\label{sec:setup}

Let there be $K$ arms. 
Each arm $i \in [K] = \{1,2,\dots,K\}$ is associated with a probability distribution $\nu_i = \Bernoulli(\mu_i)$, $\mu_i \in (0,1)$.
At each round $t=1,2,\dots,T$, the forecaster selects a set of $L < K$ arms $I(t)$, then receives the rewards of the selected arms.
The reward $X_{i}(t)$ of each selected arm $i$ is i.i.d. samples from $\nu_i$.
 Let $N_i(t)$ be the number of draws of arm $i$ before round $t$ (i.e., $N_i(t) = \sum_{t'=1}^{t-1} \Ind\{i \in I(t')\}$, where $\Ind\{\EA\} = 1$ if event $\EA$ holds and $= 0$ otherwise.),
  and $\hatmu_i(t)$ be the empirical mean of the rewards of arm $i$ at the beginning of round $t$. 
The forecaster is interested in maximizing the sum of rewards over drawn arms.
For simplicity, we assume that all arms have distinct expected rewards (i.e., $\mu_i \neq \mu_j$ for any $i \neq j$).
We discuss the case in which $\mu_i = \mu_j$ for some $i$ and $j$ in Appendix \ref{subsec:sameexpectations}, which is in Supplementary Material.
Without loss of generality, we assume $\mu_1 > \mu_2 > \mu_3 > \dots > \mu_K$.
Of course, algorithms do not exploit this ordering.
We define optimal arms as top-$L$ arms (i.e., arms $[L]$), and suboptimal arms as the others (i.e., arms $\subopts$).
The regret, which is the expected loss of the forecaster, is defined as
\begin{align*}
\Regret(T)
&=\sum_{t=1}^T  \left(\sum_{i \in [L]} \mu_i - \sum_{i \in I(t)} \mu_i\right).
\end{align*}
The expectation of regret $\Expect[\Regret(T)]$ is used to measure the performance of an algorithm.

\section{Regret Bounds}

In this section we introduce the known lower bounds of the regret
for the SP-MAB and MP-MAB problems and discuss the relation between them.

\subsection{Regret bound for SP-MAB problem}
\label{sec:spmab}

The SP-MAB problem, which has been thoroughly studied in the fields of statistics and machine learning, is a special case of the MP-MAB problem with $L=1$. 
The optimal regret bound in the SP-MAB problem was given by \citet{LaiRobbins1985}.
They proved that, for any strongly consistent algorithm (i.e., algorithms with subpolynomial regret for any set of arms), there exists a lower bound
\begin{equation}
  \Expect[N_i(T+1)] \geq \left( \frac{1 - o(1)}{d(\mu_i, \mu_1)} \right) \log{T},
    \label{ineq:singlelower}
\end{equation}
where $d(p, q) = p \log{\left(p/q\right)} + (1-p) \log{\left((1-p)/(1-q)\right)}$ is the KL divergence between two Bernoulli distributions with expectation $p$ and $q$.
Note that when arm $i$ is drawn, the regret increases by $\Delta_{i,1}$
and the regret is written as
\begin{align}
\Expect[\Regret(T)] =  \sum_{i \neq 1} N_i(T+1) \Delta_{i,1},\label{relation_sp}
\end{align}
where $\Delta_{i,j} = \mu_j - \mu_i$.
Therefore, inequality \eqref{ineq:singlelower} directly leads to the regret lower bound
\begin{equation}
  \Expect[\Regret(T)] \geq \sum_{i \neq 1} \left( \frac{(1 - o(1))\Delta_{i,1}}{d(\mu_i, \mu_1)} \right) \log{T}.
    \label{lower_single}
\end{equation}

One may think that applying the techniques of the SP-MAB problem would directly yield an optimal bound for a more general MP-MAB problem. However, this is not the case. In short, the difficulty in analyzing the regret on the MP-MAB problem arises from the fact that the optimal bound on the number of suboptimal arm draws does not directly lead to the optimal regret. From this point forward, we focus on the MP-MAB problem in which $L$ is not restricted to one.

\subsection{Extension to MP-MAB problem}
\label{sec:optimalmpmab}

The regret lower bound in the MP-MAB problem, which is the generalization of
inequality \eqref{lower_single},
was provided by \citet{anantharam1987asymptotically}.
They first proved that,
 for any strongly consistent algorithm and suboptimal arm $i$, 
 the number of arm $i$ draws is lower-bounded as
\begin{equation}
  \Expect[N_i(T+1)] \geq \left( \frac{ 1 - o(1) }{d(\mu_i,\mu_L) } \right) \log{T}.
  \label{ineq:multidrawlower}
\end{equation}
\begin{figure}[t]
\begin{center}
\centerline{\includegraphics[scale=0.32]{images/simultaneityex2.pdf}}
\caption{Two bandit games with the same set of arms. $r(t)$ is defined as the increase in the regret at round $t$. In both games 1 and 2, we have the same number of suboptimal arm draws ($N_3(2) = N_4(2) = 1$). However, the regret in games 1 and 2 are different.}
\label{fig:twocasesregret}
\end{center}
\vspace{-2em}
\end{figure}%

Unlike in the SP-MAB problem, the regret in the MP-MAB problem is not uniquely determined by the number of suboptimal arm draws. As illustrated in Figure \ref{fig:twocasesregret}, the regret is dependent on the combinatorial structure of arm draws.

Recall that a regret increase at each round is the gap of expected rewards between the optimal arms and that of the selected arms.
When a suboptimal arm is selected, one optimal arm is excluded from $I(t)$ instead of the suboptimal arm. Let the selected suboptimal arm and excluded optimal arm be $i$ and $j$, respectively. Then, we lose expected reward $\mu_j-\mu_i$.
Namely, the loss in the expected reward at each round is given by
\begin{eqnarray}
\sum_{j \in [L]} \mu_j - \sum_{i \in I(t)} \mu_i
  & = & \sum_{j \in [L] \setminus I(t)} \mu_j - \sum_{i \in I(t) \setminus [L]} \mu_i  \label{ineq:eqsmallest} \nn
  & \geq & \sum_{i \in I(t) \setminus [L]} (\mu_L - \mu_i), \label{ineq:lowersmallest}
\end{eqnarray}
where we used the fact $\mu_j \geq \mu_L$ for any optimal arm $j$.
From this relation, the regret is expressed as
\begin{align}
\Regret(T)
&\ge \sum_{t=1}^T
\sum_{i \in I(t) \setminus [L]} (\mu_L - \mu_i)\nn
&=
\sum_{i \in [K] \setminus [L]} (\mu_L - \mu_i)N_i(T+1)
\label{lower_regret}
\end{align}
which, combined with \eqref{ineq:multidrawlower},
leads to the regret lower bound by \citet{anantharam1987asymptotically}
that any strongly consistent algorithm satisfies
\begin{equation}
  \Expect[\Regret(T)] \geq \sum_{i \in \subopts} \frac{ (1 - o(1)) \Delta_{i,L} }{d(\mu_i, \mu_L) } \log{T}.
  \label{ineq:multiregretlower}
\end{equation}

\begin{algorithm}[t]
 \caption{Multiple-play Thompson sampling (MP-TS) for binary rewards}
 \label{alg:mpts}
\begin{algorithmic}
   \STATE Input: \# of arms\,$K$, \# of selection $L$
   \FOR{$i = 1,2,\dots,K$}
     \STATE $A_{i}, B_{i} = 1,1 $
   \ENDFOR 
   \STATE $t \leftarrow 1$.
   \FOR{$t = 1,2,\dots,T$}
     \FOR{$i = 1,2,\dots,K$}
       \STATE $\thetai(t) \sim \Beta(A_i, B_i)$ 
     \ENDFOR
       \STATE $\selected = $ top-$L$ arms ranked by $\thetai(t)$.
       \FOR{$i \in I(t)$}
         \IF{$X_{i}(t) = 1$}
           \STATE $A_{i} \leftarrow A_{i} + 1$
         \ELSE
           \STATE $B_{i} \leftarrow B_{i} + 1$
         \ENDIF
       \ENDFOR
   \ENDFOR
\end{algorithmic}
\end{algorithm}

\subsection{Necessary condition for an optimal algorithm}
\label{subsec:condoptimalregret}

In Sections \ref{sec:spmab} and \ref{sec:optimalmpmab},
we saw that the derivations of the regret bounds are analogous between the SP-MAB and MP-MAB problems.
However, there is a difference in
the relation between the regret and $N_i(T)$, the number of draws of suboptimal arms,
is given as equation \eqref{relation_sp} in the SP-MAB problem,
 whereas it is given as inequality \eqref{lower_regret} in the MP-MAB problem.
This means that, an algorithm achieving the asymptotic lower bound \eqref{ineq:multidrawlower}
on $N_i(T)$
does not always achieve the asymptotic regret bound \eqref{ineq:multiregretlower}.

When suboptimal arm $i$ is selected, one of the optimal arms is pushed out instead of arm $i$,
and the regret increases by the difference between the expected rewards of these two arms.
The best scenario is that, arm $L$, which is the optimal arm with the smallest expected reward,
is almost always the arm pushed out instead of a suboptimal arm.
For this scenario to occur,
it is necessary to ensure that
at most one suboptimal arm is drawn
for almost all rounds
because, if two suboptimal arms are selected,
at least one arm in $[L-1]$ is pushed out.

In the next section, we propose an extension of TS to the MP-MAB problem,
 and explain that it has a crucial property for suppressing this simultaneous draw of two suboptimal arms.

\noindent\textbf{Remark:} %
Corollary 1 of \citet{gopalancomplex} shows
the achievability of the bound in the RHS of
\eqref{ineq:multidrawlower} on the number of draws of suboptimal arms.
Whereas this does not lead to the optimal regret bound as discussed above,
they originally derived in Theorem 1 an $O(\log T)$ bound on the number of each suboptimal action
(that is, each combination of arms including suboptimal ones)
for a more general setting of MP-MAB.
Thus, we can directly use this bound to derive a better regret bound.
However, to show the optimality in the sense of regret
it is necessary to prove that
there are at most $o(\log T)$ rounds such that
an arm in $[L-1]$ is pushed out.
Therefore, it still requires further discussion to derive the
optimal regret bound of TS.
Note also that the regret bound by \citet{gopalancomplex} is restricted to the case that
the prior has a finite support and the true parameter is in the support, and thus their analysis requires some approximation scheme for dealing Bernoulli rewards.

\section{Multiple-play Thompson Sampling Algorithm}
\label{subsec:alg}

Algorithm \ref{alg:mpts} is our MP-TS algorithm.
While TS for single-play selects the top-1 arm based on a posterior sample $\theta_i(t)$,
MP-TS selects the top-$L$ arms ranked by the posterior sample $\theta_i(t)$. 
Like \citet{kaufmannthompson} and \citet{shiprafurther}, we set the uniform prior on each arm.

In Section \ref{subsec:condoptimalregret}, we discussed
that the necessary condition to achieve the optimal regret bound is to suppress the simultaneous draws of
two or more suboptimal arms, which characterizes the difficulty of the MP-MAB problem.

Note that it is easy to extend other asymptotically optimal SP-MAB algorithms, such as KL-UCB, to the MP-MAB problem.
Nevertheless, we were not able to prove the optimality of these algorithms
for the MP-MAB problem though the achievability of the bound
\eqref{ineq:multidrawlower} on $N_i(T)$ is easily proved, and the simulation results in Section \ref{sec:experiment} also imply their achievability of the regret bound.
This is because TS has quite a plausible property to suppress simultaneous draws as we discuss below.

Before the exact statement in the next section, we give an intuition for the natural extension of TS (or other asymptotically optimal SP-MAB algorithms)
can have the optimal regret in the MP-MAB problem.
Roughly speaking, a bandit algorithm with a logarithmic regret
draws a suboptimal arm with probability $O(1/t)$
at the $t$-th round, which amounts to $O(\sum_{t=1}^T 1/t)=O(\log T)$ regret.
Thus, two suboptimal arms are drawn at the same round
with probability $O(1/t^2)$, which amounts to
$O(\sum_{t=1}^T 1/t^2)=O(1)$ total simultaneous draws, provided that
each suboptimal arm is selected independently.

In TS, the score $\theta_i(t)$ for the choice of arms is generated randomly at each round
from the posterior independently between each arm,
which enables us to bound simultaneous draws as the above intuition.
On the other hand, in KL-UCB (or in other index policies),
the UCB score for the choice of arms is deterministic given the past results of rewards,
which means that the scores of suboptimal arms may behave quite similarly
in the worst case on the past rewards.

\section{Optimal Regret Bound}
\label{sec:result}

In this section, we state the main theoretical result (Theorem \ref{thm:mainregret}).
The analysis that leads to this theorem is discussed in Section \ref{sec:analysis}.

\begin{theorem} {\rm (Regret upper bound of MP-TS)}
For any sufficiently small $\epsilonOne>0, \epsilon_2>0$, the regret of MP-TS is upper-bounded as
\begin{multline*}
\Expect[\Regret(T)] \leq \sum_{i \in \subopts} \left( \frac{(1+\epsilonOne) \Delta_{i,L} \log{T}}{d(\mu_i, \mu_L)}  \right) \\ + C_a(\epsilonOne, \mu_1,\mu_2,\dots,\mu_K) + C_b(T, \epsilon_2, \mu_1,\mu_2,\dots,\mu_K), 
\end{multline*}
where, $C_a = C_a(\epsilonOne, \mu_1,\mu_2,\dots,\mu_K)$ is a constant independent on $T$
and is $O(\epsilonOne^{-2})$ when we regard $\{\mu_i\}_{i=1}^K$ as constants.
The value $C_b = C_b(T, \epsilon_2, \mu_1,\mu_2,\dots,\mu_K)$ is a function of $T$,
which, by choosing proper $\epsilon_2$, grows at a rate of $O(\log \log{T}) = o(\log{T})$.
\label{thm:mainregret}
\end{theorem}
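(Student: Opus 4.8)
The plan is to begin from an exact accounting identity for the regret rather than from an inequality. At each round the loss is $r(t)=\sum_{j\in[L]}\mu_j-\sum_{i\in I(t)}\mu_i$, and pairing the drawn suboptimal arms $S(t)=I(t)\setminus[L]$ against the excluded optimal arms $O(t)=[L]\setminus I(t)$ (which satisfy $|S(t)|=|O(t)|$) and measuring every excluded optimal reward from the baseline $\mu_L$ gives $r(t)=\sum_{i\in S(t)}\Delta_{i,L}+\sum_{j\in O(t)}(\mu_j-\mu_L)$. Summing over $t$ yields the identity
\[
\Expect[\Regret(T)]=\sum_{i\in\subopts}\Delta_{i,L}\,\Expect[N_i(T+1)]+\sum_{j\in[L-1]}(\mu_j-\mu_L)\,\Expect[M_j(T)],
\]
where $M_j(T)=\sum_{t=1}^T\Ind\{j\notin I(t)\}$ counts the rounds in which the good arm $j$ is pushed out (the $j=L$ contribution vanishes since $\mu_L-\mu_L=0$). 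The first sum is the ideal regret in which each suboptimal draw displaces only arm $L$; the second is the overhead incurred whenever an arm strictly better than $L$ is excluded. The whole argument reduces to bounding these two expectations.

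For the first sum the plan is to adapt the single-play Thompson sampling analysis (following Agrawal--Goyal and Kaufmann--Korda--Munos) to the top-$L$ selection rule and bound $\Expect[N_i(T+1)]$ for each suboptimal $i$. The reduction I would use is that, on the high-probability event that the empirical means of all optimal arms are well concentrated, a suboptimal arm $i$ can enter the top-$L$ only if its posterior draw $\theta_i(t)$ clears a threshold essentially at $\mu_L$, the smallest optimal mean, because displacing arm $L$ is the cheapest way into the selected set. Controlling the Beta posterior tail in the usual way then yields $\Expect[N_i(T+1)]\le (1+\epsilon_1)\log T/d(\mu_i,\mu_L)+O(\epsilon_1^{-2})$, the $\epsilon_1^{-2}$ arising from the concentration and anti-concentration sums and absorbed into $C_a$. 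Multiplying by $\Delta_{i,L}$ and summing over $i$ reproduces the leading term. Since this is in essence the achievability of \eqref{ineq:multidrawlower}, already noted in the paper to be easy, I expect this step to be routine.

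The hard part is showing $\sum_{j\in[L-1]}(\mu_j-\mu_L)\,\Expect[M_j(T)]=O(\log\log T)$. The structural fact I would lean on is that whenever a good arm $j\in[L-1]$ is excluded, at least one suboptimal arm must be drawn, so I split $M_j(T)$ by whether exactly one or at least two suboptimal arms lie in $I(t)$. In the ``exactly one'' case all of $[L]\setminus\{j\}$ are selected, so in particular arm $L$ outranks arm $j$: an ordering error between two heavily drawn, well-separated optimal arms, whose probability decays under posterior concentration. The ``at least two'' case is exactly where the independence of the posterior samples is decisive: conditioning on the pivotal $L$-th largest optimal sample $b(t)$, the events $\{\theta_i(t)>b(t)\}$ over distinct suboptimal arms are conditionally independent, so the chance that two suboptimal arms clear the pivot together factorizes into a product of per-arm selection probabilities, each of order $1/t$ in the exploration regime, leaving an $O(1/t^2)$ summand.

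The main obstacle, and the source of both the parameter $\epsilon_2$ and the $O(\log\log T)$ rate, is to make the informal ``$1/t$ per arm, hence $1/t^2$ per pair'' estimate rigorous uniformly in $t$. In early rounds an optimal arm may be under-drawn, its posterior too diffuse for the pivot $b(t)$ to sit near $\mu_L$, so the per-arm clearing probabilities are not yet $O(1/t)$; taming this transient calls for a peeling argument over dyadic ranges of the optimal arms' draw counts, and it is the aggregate contribution of these slices --- rather than a clean constant --- that produces the $\log\log T$ term. I would therefore invest most of the effort in choosing the concentration events for the optimal arms, fixing the threshold $\epsilon_2$ that separates the concentrated regime from the transient, and checking that the residual transient rounds sum to $o(\log T)$. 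Assembling the ordering-error term with the simultaneous-draw term, and collecting the $T$-independent pieces into $C_a$ and the slowly growing pieces into $C_b$, should then be bookkeeping.
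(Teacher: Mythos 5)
Your opening identity is correct, and it is essentially the paper's display \eqref{ineq:eqsmallest} rearranged: the paper likewise isolates a main term $\sum_{i\in\subopts}\Delta_{i,L}N_i^{\mathrm{suf}}(T)$ and treats everything else as overhead, though it does so arm-by-arm via $\Regret_i(T)$ and an event decomposition (terms (A)--(D)) rather than through the global quantities $M_j(T)$. Your bound on $\Expect[N_i(T+1)]$ and your independence-of-posteriors argument for two simultaneously drawn suboptimal arms match the paper's terms (A)/(D) and (C) in spirit. The genuine gap is in your case (a), where exactly one suboptimal arm $i$ is drawn and some $j\in[L-1]$ is excluded. You reduce this to ``arm $L$ outranks arm $j$, an ordering error between two heavily drawn optimal arms.'' First, ``heavily drawn'' is circular at this stage: linear growth of $N_j(t)$ and $N_L(t)$ is part of what is being proved. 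Second, and more seriously, the reduction drops the decisive constraint. Fix a threshold $\nu\in(\mu_L,\mu_{L-1})$. If $\theta_j(t)\ge\nu$ (no underestimation of $j$), then the drawn suboptimal arm must satisfy $\theta_i(t)>\theta_j(t)\ge\nu>\mu_i$, and the number of rounds on which a suboptimal arm is drawn while its posterior sample exceeds $\nu$ is $\Theta(\log T/d(\mu_i,\nu))$, not $o(\log T)$. Charging all such rounds to the overhead at rate $\mu_j-\mu_L>0$ would introduce a spurious $\Theta(\log T)$ term and destroy the optimal constant.

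What rescues this case is that arm $L$ is selected on such a round, so $\theta_L(t)>\theta_j(t)\ge\nu>\mu_L$ must hold \emph{simultaneously} with $\theta_i(t)>\nu>\mu_i$: two over-deviations of two distinct arms whose posterior samples are conditionally independent. This is exactly the product/independence mechanism you reserve for the ``two suboptimal arms'' subcase, with arm $L$ now playing the role of the second offender; it is why the paper's index set $\lisubopts$ appearing in $\EC_i(t)$ and in term (C) deliberately includes arm $L$ alongside the suboptimal arms, and why the $\epsilon_2\log T+T^{-\epsilon_2 d(\nu_2,\nu)}\log T$ trade-off (optimized at $\epsilon_2=\Theta((\log\log T)/\log T)$, which is where the $\log\log T$ actually comes from, rather than from dyadic peeling) is applied to arm $L$ as well. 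The complementary subcase $\theta_j(t)<\nu$ is the paper's term (B), which is itself delicate: it is controlled through the draw count of the suboptimal arm $i$ present at that round (yielding the $(\log\log T)/d(\mu_L,\nu)$ cap), not through concentration of $N_j(t)$. So your decomposition is workable, but case (a) needs the independence argument no less than case (b); as written the argument would not close.
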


By letting $\epsilonOne=O((\log T)^{-1/3})$
we obtain
\begin{align}
\Expect[\Regret(T)] \leq \sum_{i \in \subopts} \frac{\Delta_{i,L}\log T}{d(\mu_i, \mu_L)}
+O((\log T)^{2/3})
\label{ineq:regretmain}
\end{align}
and we see that
MP-TS achieves the asymptotic bound in \eqref{ineq:multiregretlower}.

\noindent\textbf{Expected regret and high-probability regret:}
\citet{anantharam1987asymptotically}
originally derived a regret lower bound
in a stronger form than \eqref{ineq:multiregretlower}
such that for any $\epsilon>0$, the regret of a strongly consistent algorithm is lower-bounded as
\begin{align*}
\lim_{T\to\infty}\Pr\left[\frac{\Regret(T)}{\log T}
\ge
\sum_{i \in \subopts} \frac{ (1 - \epsilon) \Delta_{i,L} }{d(\mu_i, \mu_L) }
\right]=1.
\end{align*}
Combining this with \eqref{ineq:regretmain} we can easily see that
MP-TS satisfies
\begin{align}
\lim_{T\to\infty}
\Pr\left[
\frac{\Regret(T)}{\log T} \leq \sum_{i \in \subopts} \frac{(1+\epsilon)\Delta_{i,L}}{d(\mu_i, \mu_L)}
\right]=1,\label{optimality_prob}
\end{align}
that is, MP-TS is also asymptotically optimal in the sense of
high probability.
Since an algorithm satisfying \eqref{optimality_prob}
is not always optimal in the sense of expectation,
our result, the expected optimal regret bound, is also stronger in this sense
than the high-probability bound by \citet{gopalancomplex}.

\vspace{-0.5em}
\section{Regret Analysis}
\label{sec:analysis}
\vspace{-0.5em}

We first define some additional notation that are useful for our analysis in Section \ref{subsec:addnotation} then analyze the regret bound in Section \ref{subsec:regretanalysis}. The proofs of all the lemmas, except for Lemma \ref{lem:regretfourdecomposition}, are given in the Appendix.

\vspace{-0.5em}
\subsection{Additional notation}
\label{subsec:addnotation}
\vspace{-0.5em}

Let $\mu_L^{(-)} = \mu_L - \epsilononed$ and
$\mu_i^{(+)} = \mu_i + \epsilononed$
for $\epsilononed>0$ and $i \in \subopts$.
We assume $\epsilononed$ to be sufficiently small such that $\mu_L^{(-)} \in (\mu_{L+1}, \mu_L)$ and $\mu_i^{(+)} \in (\mu_i, \mu_L)$.
We also define $N_i^{\mathrm{suf}}(T) = \frac{\log{T}}{d(\mu_i^{(+)}, \mu_L^{(-)})}$.
Intuitively, $N_i^{\mathrm{suf}}(T)$ is the sufficient number of explorations to make sure that arm $i$ is not as good as arm $L$.

\textbf{Events:}
Now, let $\tmmax{i\in S}{m}a_i$ denote the $m$-th largest element of $\{a_i\}_{i \in S} \in \mathbb{R}^{|S|}$,
that is, $\dmmax{i\in S}{m}a_i = \max_{S'\subset S:|S'|=m}\min_{i\in S'}a_i$.
We define $\theta^*(t)=\tmmax{i\in [K]}{L}\theta_i(t)$ as the $L$-th largest posterior sample at round $t$
 (i.e., the minimum posterior sample among the selected arms),
 and $\thetaijsst=\tmmax{k \in [K]\setminus \{i,j\}}{L-1}\theta_k(t)$ as the $(L-1)$-th largest posterior sample at round $t$
 except for arms $i$ and $j$.
Moreover, let $\nu = \frac{\mu_{L-1}+\mu_L}{2}$.
Let us define the following events.
\begin{eqnarray*}
  \EA_i(t) & = & \{ i \in I(t) \}, \\
  \EB(t) & = & \{ \theta^*(t) \geq \mu_L^{(-)} \}, \\ 
  \EC_i(t) & = & \bigcap_{j \in \lisubopts} \{ \thetaijsst \geq \nu \},\\ 
  \ED_i(t) & = & \{ N_i(t) < N_i^{\mathrm{suf}}(T) \}. 
\end{eqnarray*}
Event $\EA_i(t)$ states that arm $i$ is sampled at round $t$, and $\ED_i(t)$ states that arm $i$ has not been sampled sufficiently yet. The complements of $\EB(t)$ and $\EC_i(t)$ are related to the underestimation of optimal arms. Since the optimal arms are sampled sufficiently, $\EB^c(t)$ or $\EC_i^c(t)$ should not occur very frequently.

\subsection{Proof of Theorem \ref{thm:mainregret}} 
\label{subsec:regretanalysis}

We first decompose the regret to the contribution of each arm.
Recall that, the regret increase by drawing suboptimal arm $i$ is determined by the optimal arm excluded in the selection set $I(t)$.
 Formally, for suboptimal arm $i$, let
\begin{equation}
 \Delta_i(t) =\begin{cases}
    (\max_{j\in [L] \setminus I(t)} \mu_j) - \mu_i & \mathrm{if} \hspace{0.5em} I(t)\neq [L], \\
    0 &\mathrm{otherwise},
  \end{cases}\label{def_delta}
\end{equation}
and 
\begin{equation*}
 \Regret_i(T) =  \sum_{t=1}^T \Ind\{i \in I(t)\} \Delta_i(t).
\end{equation*}
From inequality (\ref{ineq:eqsmallest}) the following inequality is easily derived
\begin{equation*}
 \Regret(T) \leq \sum_{i \in \subopts}\Regret_i(T).
\end{equation*}
We next decompose $\Regret_i(T)$ into several terms by using events $\EA$--$\ED$.
After giving bounds for these terms, we finally give the total regret bound, which proves Theorem \ref{thm:mainregret}.
Note that, in bounding the deviation of Bernoulli means and Beta posteriors in the Appendix, our analysis borrowed some techniques developed in the context of the SP-MAB problem, mostly from \citet{shiprafurther}, and some from \citet{DBLP:conf/aistats/HondaT14}. 

\begin{lemma}
The regret by drawing suboptimal arm $i>L$ is decomposed as:
\label{lem:regretfourdecomposition}
\begin{multline*}
 \Regret_i(T) \leq \underbrace{ \sum_{t=1}^T \Ind\{\EB^c(t)\} }_{\mathrm{(A)}} + \underbrace{ \sum_{t=1}^T \Ind\{\EA_i(t), \EC_i^c(t)\} }_{\mathrm{(B)}} \\ + \underbrace{ \sum_{j \in \lisubopts} \sum_{t=1}^T \Ind\{\EA_i(t), \EC_i(t), \ED_i(t), \EA_j(t)\} }_{\mathrm{(C)}} \\ + \underbrace{ \sum_{t=1}^T \Ind\{\EA_i(t), \EB(t), \ED_i^c(t)\} }_{\mathrm{(D)}} + N_i^{\mathrm{suf}}(T) \Delta_{i,L},
\end{multline*}
where, for example, $\{\EA, \EB\}$ abbreviates $\{\EA \cap \EB\}$.
\end{lemma}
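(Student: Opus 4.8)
The plan is to prove Lemma~\ref{lem:regretfourdecomposition} as a deterministic, round-by-round bookkeeping bound: on every sample path I would upper-bound the per-round contribution $\Ind\{\EA_i(t)\}\Delta_i(t)$ by a sum of indicator terms that, once summed over $t$, collapse into $\mathrm{(A)}$--$\mathrm{(D)}$ plus the leading $N_i^{\mathrm{suf}}(T)\Delta_{i,L}$. First I would record two elementary facts used throughout. Since every $\mu_k\in(0,1)$, we have $\Delta_i(t)=\max_{j\in\opts\setminus\selected}\mu_j-\mu_i\le 1$. Moreover, whenever $\EA_i(t)$ holds we have $i\notin\opts$ but $i\in\selected$, so $\selected\neq\opts$ and the first branch of the definition \eqref{def_delta} is the relevant one.

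The second step is a nested case split on the auxiliary events. Writing $\Ind\{\EA_i(t)\}=\Ind\{\EA_i(t),\EB^c(t)\}+\Ind\{\EA_i(t),\EB(t)\}$ and iterating with $\EC_i(t)$ and $\ED_i(t)$, I would peel off three of the terms by merely replacing $\Delta_i(t)$ with its bound $1$ and discarding a redundant conjunct: the branch $\EB^c(t)$ gives $\Ind\{\EB^c(t)\}$, hence $\mathrm{(A)}$; the branch $\EB(t),\EC_i^c(t)$ gives $\Ind\{\EA_i(t),\EC_i^c(t)\}$, hence $\mathrm{(B)}$; and the branch $\EB(t),\EC_i(t),\ED_i^c(t)$ gives $\Ind\{\EA_i(t),\EB(t),\ED_i^c(t)\}$, hence $\mathrm{(D)}$. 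Each of these uses only $\Delta_i(t)\le 1$ and the monotonicity of indicators under dropping conjuncts.

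The only genuinely combinatorial step is the surviving branch $\EA_i(t)\cap\EB(t)\cap\EC_i(t)\cap\ED_i(t)$, which I would split according to whether some competitor $j\in\lisubopts$ is also selected. If $\EA_j(t)$ holds for at least one such $j$, then $1\le\sum_{j\in\lisubopts}\Ind\{\EA_j(t)\}$, so the contribution is dominated by $\sum_{j\in\lisubopts}\Ind\{\EA_i(t),\EC_i(t),\ED_i(t),\EA_j(t)\}$, producing $\mathrm{(C)}$. Otherwise no arm of $\lisubopts=[K]\setminus([L-1]\cup\{i\})$ lies in $\selected$; since $i\in\selected$ and $|\selected|=L=|[L-1]\cup\{i\}|$, this forces $\selected=[L-1]\cup\{i\}$ exactly, so the unique excluded optimal arm is $L$ and $\Delta_i(t)=\mu_L-\mu_i=\Delta_{i,L}$. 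I expect this identity to be the crux and the main obstacle: the point is that $\lisubopts$ contains arm $L$ \emph{as well as} the other suboptimal arms, so excluding all of $\lisubopts$ from the selection both pins $\selected$ down and certifies the minimal regret increase $\Delta_{i,L}$; getting the index set in the definition of the events exactly right is what makes this clean.

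Finally, I would accumulate the leading term. In this last branch the per-round contribution is at most $\Delta_{i,L}\Ind\{\EA_i(t),\ED_i(t)\}$, and $\sum_{t=1}^T\Ind\{\EA_i(t),\ED_i(t)\}\le N_i^{\mathrm{suf}}(T)$, because every such round draws arm $i$ while $N_i(t)<N_i^{\mathrm{suf}}(T)$, so at most $N_i^{\mathrm{suf}}(T)$ of them can occur. Summing the five bounded pieces over $t$ yields the claimed decomposition. Beyond the combinatorial identity above I anticipate no analytic difficulty, since every remaining step is an indicator manipulation combined with the uniform bound $\Delta_i(t)\le 1$.
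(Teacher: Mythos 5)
Your proposal is correct and follows essentially the same route as the paper's proof: the same nested event decomposition peeling off (A), (B), and (D) via $\Delta_i(t)\le 1$, the same split of the surviving branch $\{\EA_i(t),\EC_i(t),\ED_i(t)\}$ according to whether some $j\in\lisubopts$ is also selected (yielding (C) by a union bound, and otherwise forcing $\selected=[L-1]\cup\{i\}$ so that $\Delta_i(t)=\Delta_{i,L}$), and the same counting argument $\sum_{t=1}^T\Ind\{\EA_i(t),\ED_i(t)\}\le N_i^{\mathrm{suf}}(T)$ for the leading term. Your explicit justification of the identity $\selected=[L-1]\cup\{i\}$ via the cardinality argument is exactly the (tersely stated) crux of the paper's argument, so no substantive difference remains.
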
%
Roughly speaking, 
\begin{itemize}
\vspace{-0.9em}
 \item Term (A) corresponds to the case in which, some of the optimal arms are under-estimated.  
\vspace{-0.9em}
 \item Term (B) corresponds to the case in which, arm $i$ is selected and some of the arms in $[L-1]$ are under-estimated.   
\vspace{-0.9em}
 \item Term (C) corresponds to the case in which, arm $i \in \subopts$ and  $j \in \lisubopts$ are simultaneously drawn. In particular, term (C) is unique in the MP-MAB problem that causes additional regret increase,
and in analyzing this term we fully use the fact that the samples of the posterior distributions on the arms are independent of each other.
%\vspace{-0.9em}
 \item Term (D) corresponds to the case in which, arm $i$ is selected after it is sufficiently explored. 
\vspace{-1.0em}
\end{itemize}
\begin{proof}[Proof of Lemma \ref{lem:regretfourdecomposition}]
The contribution of suboptimal arm $i$ to the regret is decomposed as follows.
By using the fact $\Delta_i(t) \leq 1$ and the following decomposition of an event
\begin{align*}
\lefteqn{
\EA_i(t) \subset \EB^c(t) \cup \{\EA_i(t), \EC_i^c(t)\} \cup \{\EA_i(t), \EB(t), \EC_i(t)\}
} \\
 & \subset \EB^c(t) \cup \{\EA_i(t), \EC_i^c(t)\}   \\
 & \hspace{5em} \cup \{\EA_i(t), \EB(t), \ED_i^c(t)\} \cup \{\EA_i(t), \EC_i(t), \ED_i(t)\},
\end{align*}
we have
\begin{align}
 \Regret_i(T) = \sum_{t=1}^T \Ind\{\EA_i(t)\} \Delta_i(t) \hspace{-13em} \nn
  & \leq \sum_{t=1}^T \Ind\{\EB^c(t)\} + \sum_{t=1}^T \Ind\{ \EA_i(t), \EC_i^c(t) \} \nn
  &  \hspace{2em} + \sum_{t=1}^T \Ind\{\EA_i(t), \EB(t), \ED_i^c(t)\} \nn
  &  \hspace{2em} + \sum_{t=1}^T \Ind\{\EA_i(t), \EC_i(t), \ED_i(t)\} \Delta_i(t). \label{ineq:regretterms1}
\end{align}
Recall that $\Delta_i(t)$ is defined as \eqref{def_delta}. At each round, when $L$ and all suboptimal arms, except for $i$, are not selected, then $I(t)= \{1,2,\dots,L-1,i\}$; $\Delta_i(t) = \Delta_{i,L} $.
Therefore,
\begin{align}
\lefteqn{
  \sum_{t=1}^T \Ind\{\EA_i(t), \EC_i(t), \ED_i(t)\} \Delta_i(t) 
} \nn
  & \leq \sum_{t=1}^T \Ind\{\EA_i(t), \EC_i(t), \ED_i(t)\} \Delta_{i,L} \nn
  & \hspace{1em} + \sum_{t=1}^T \Ind\{\EA_i(t), \EC_i(t), \ED_i(t), \bigcup_{j \in \lisubopts} \EA_j(t)\} \nn
 &  \leq \sum_{t=1}^T \Ind\{\EA_i(t), \ED_i(t)\} \Delta_{i,L} \nn
& \hspace{1em}+ \sum_{j \in \lisubopts} \sum_{t=1}^T  \Ind\{\EA_i(t), \EC_i(t), \ED_i(t), \EA_j(t)\} \nn
 &  \leq N_i^{\mathrm{suf}}(T) \Delta_{i,L} 
\nn & \hspace{1em} + \sum_{j \in  \lisubopts}  \sum_{t=1}^T  \Ind\{\EA_i(t), \EC_i(t), \ED_i(t), \EA_j(t)\}.  \label{ineq:regretterms3}
\end{align}
Summarizing \eqref{ineq:regretterms1} and \eqref{ineq:regretterms3} completes the proof.
\end{proof}

The following lemma bounds terms (A)--(D).
\begin{lemma} {\rm (Bounds on individual terms)}
Let $\epsilon_2 > 0$ be arbitrary.
For sufficiently small $\epsilononed$ and $\epsilon_2$, the four terms are bounded in expectation as:
\begin{eqnarray}
 \Expect[\mathrm{(A)}] \hspace{-0.5em} & = & \hspace{-0.5em} O\left(\frac{1}{(\mu_L-\mu_L^{(-)})^2}\right) = O\left(\frac{1}{\epsilononed^2}\right), \label{ineq:boundterma}\\
 \Expect[\mathrm{(B)}] \hspace{-0.5em} & = & \hspace{-0.5em} O(\log{\log{T}}), \label{ineq:boundtermb} \\ 
 \Expect[\mathrm{(C)}] \hspace{-0.5em} & \leq & \hspace{-1.4em} \sum_{j \in \lisubopts} \hspace{-1em} \frac{\left(\epsilon_2 +  4 T ^ {- \frac{\epsilon_2 \Delta_{L,L-1}^2}{8} } \hspace{-0.2em}\right) \log{T}}{d(\mu_i, \mu_L)}  \hspace{-0.1em} + \hspace{-0.1em} O(1), \hspace{-1em} \nonumber\\ \text{and} \label{ineq:boundtermc} \\
 \Expect[\mathrm{(D)}] \hspace{-0.7em} & \leq & \hspace{-0.7em} 2 \hspace{-0.2em} + \hspace{-0.2em} \frac{1}{d(\mu_i^{(+)}, \mu_i)} = O\left(\frac{1}{\epsilononed^2}\right). \label{ineq:boundtermd} 
\end{eqnarray}
\label{lem:fourterms}
\end{lemma}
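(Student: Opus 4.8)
The plan is to bound the four terms separately with a common toolkit, reducing (A), (B) and (D) to single-arm posterior-concentration estimates of the kind already available for single-play TS, and reserving the genuinely combinatorial argument for the simultaneous-draw term (C). Throughout I would condition on the history $\Fil_t$ at the start of round $t$, under which the samples $\{\theta_k(t)\}_k$ are mutually independent, and I would decompose each count $\sum_t\Ind\{\cdots\}$ according to the draw count $N_k(t)=s$ of the relevant arm, converting Beta tails into Bernoulli large-deviations $\e^{-N_k(t)\,d(\cdot,\cdot)}$ via the Beta--Binomial identity. Terms (A) and (B) are underestimation events for the top arms. For (A), the combinatorial fact that $\EB^c(t)$ forces some optimal $k\in[L]$ to undershoot (if every $k\in[L]$ had $\theta_k(t)\ge\mu_L^{(-)}$ then $\theta^*(t)$ would already satisfy $\EB(t)$) gives $\Ind\{\EB^c(t)\}\le\sum_{k\in[L]}\Ind\{\theta_k(t)<\mu_L^{(-)}\}$ with $\mu_L^{(-)}\le\mu_k-\delta$; summing the Beta lower tail over $s$ yields a geometric series $\sum_s\e^{-\Theta(s\delta^2)}=O(\delta^{-2})$. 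For (B) the same counting step applied to $\EC_i^c(t)$ works because deleting the two arms $i,j\ge L$ leaves $[L-1]$ intact inside the index set: for each $j$, $\thetaijsst<\nu$ forces some $k\in[L-1]$ with $\theta_k(t)<\nu\le\mu_k-\Delta_{L,L-1}/2$, so $\Ind\{\EA_i(t),\EC_i^c(t)\}\le\sum_{k\in[L-1]}\Ind\{\theta_k(t)<\nu\}$. The margin is now a fixed constant, but the rounds in which $k$ is still lightly sampled must be controlled; carrying out the Beta--Bernoulli deviation estimate of Honda--Takemura over those rounds yields the stated $O(\log\log T)$.

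Term (D) is where the definition of $N_i^{\mathrm{suf}}(T)$ is exploited. On $\{\EA_i(t),\EB(t),\ED_i^c(t)\}$, selection of arm $i$ together with $\EB(t)$ gives $\theta_i(t)\ge\theta^*(t)\ge\mu_L^{(-)}$, so a well-explored suboptimal arm overshoots $\mu_L^{(-)}$. I would split on the empirical mean. If $\hatmu_j(t)$ is replaced by $\hatmu_i(t)>\mu_i^{(+)}$, this is an upward deviation of an average of at least $N_i^{\mathrm{suf}}(T)$ samples, and $\sum_s\e^{-s\,d(\mu_i^{(+)},\mu_i)}=O(1/d(\mu_i^{(+)},\mu_i))=O(\delta^{-2})$. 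If $\hatmu_i(t)\le\mu_i^{(+)}$, the Beta upper tail gives $\Prob[\theta_i(t)\ge\mu_L^{(-)}\mid\Fil_t]\le\e^{-N_i(t)\,d(\mu_i^{(+)},\mu_L^{(-)})}\le\e^{-N_i^{\mathrm{suf}}(T)\,d(\mu_i^{(+)},\mu_L^{(-)})}=1/T$, whose sum over the $T$ rounds is $O(1)$; the two regimes combine to $2+1/d(\mu_i^{(+)},\mu_i)$.

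Term (C) is the crux and the only place where independence of the posterior samples is indispensable. The first step is a counting observation: on $\{\EA_i(t),\EA_j(t),\EC_i(t)\}$ both $i$ and $j$ occupy slots of $I(t)$, while $\EC_i(t)$ supplies at least $L-1$ arms outside $\{i,j\}$ with samples $\ge\nu$; since only $L-2$ non-$\{i,j\}$ slots remain, some such arm $m$ is excluded and $\theta_m(t)\le\theta^*(t)\le\min(\theta_i(t),\theta_j(t))$ forces both $\theta_i(t)\ge\nu$ and $\theta_j(t)\ge\nu$. I would then split on $N_j(t)$ at the threshold $m_j=\epsilon_2\log T/d(\mu_i^{(+)},\mu_L^{(-)})$. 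The under-explored rounds $\{\EA_j(t),N_j(t)<m_j\}$ number at most $m_j$ in total, which already supplies the $\epsilon_2\log T/d(\mu_i,\mu_L)$ contribution. In the regime $N_j(t)\ge m_j$, independence is used to peel off arm $j$: since $\Prob[\EA_i(t)\mid\theta_j(t)=x,\Fil_t]$ is non-increasing in $x$, conditioning on $\theta_j(t)\ge\nu$ cannot raise the probability of $\EA_i(t)$, so $\Prob[\EA_i(t),\theta_j(t)\ge\nu\mid\Fil_t]\le\Prob[\EA_i(t)\mid\Fil_t]\,\Prob[\theta_j(t)\ge\nu\mid\Fil_t]$. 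Here $\sum_t\Ind\{\ED_i(t)\}\Prob[\EA_i(t)\mid\Fil_t]$ has expectation at most the draw budget $N_i^{\mathrm{suf}}(T)=\log T/d(\mu_i^{(+)},\mu_L^{(-)})$, while the arm-$j$ factor is $\le\e^{-m_j\,d(\nu,\rho)}=T^{-\Theta(\epsilon_2\Delta_{L,L-1}^2)}$ whenever $\hatmu_j(t)$ stays below a fixed margin $\rho\in(\mu_L,\nu)$ (the source of the $4T^{-\epsilon_2\Delta_{L,L-1}^2/8}$ factor), the complementary upward deviation of $\hatmu_j(t)$ contributing only $O(1)$. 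Summing over $j\in\lisubopts$ gives the claim.

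The main obstacle is term (C). The counting observation must be exactly right — it is precisely the deletion of both $i$ and $j$ in the definition of $\EC_i(t)$ that forces $\theta_i(t),\theta_j(t)\ge\nu$ — and converting the independence of $\theta_i(t)$ and $\theta_j(t)$ into the correlation inequality above is the conceptual heart. The delicate bookkeeping is the split at $m_j$: the threshold must be chosen so that neither the under-explored regime (bounded by the draw count of $j$) nor the well-explored regime (bounded by concentration and the draw budget of $i$) exceeds the target $\epsilon_2\log T/d(\mu_i,\mu_L)$. This is the step that realizes the paper's central thesis that independent posterior sampling suppresses simultaneous draws of suboptimal arms, and it has no analogue in the single-play analysis.
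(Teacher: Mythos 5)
Your handling of terms (C) and (D) is sound. For (D) you follow essentially the paper's route (split on $\hatmu_i(t)\gtrless\mu_i^{(+)}$, use Lemma \ref{lem_avgdeviation} on one side and the Beta upper tail at $N_i(t)\ge N_i^{\mathrm{suf}}(T)$ giving $T^{-1}$ per round on the other). For (C) your route is genuinely different: where the paper fixes $N_i(t)=n$ for each $n<N_i^{\mathrm{suf}}(T)$, looks at the first round $\tau$ at which that level can produce a simultaneous draw, and bounds each level by $T^{-\epsilon_2 d(\nu_2,\nu)}$ via Lemma \ref{lem_chernoffdeviation}, you use the negative-correlation inequality $\Prob[\EA_i(t),\theta_j(t)\ge\nu\mid\Fil_t]\le\Prob[\EA_i(t)\mid\Fil_t]\,\Prob[\theta_j(t)\ge\nu\mid\Fil_t]$ together with $\Expect[\sum_t\Ind\{\ED_i(t)\}\Prob[\EA_i(t)\mid\Fil_t]]=\Expect[\sum_t\Ind\{\ED_i(t),\EA_i(t)\}]\le N_i^{\mathrm{suf}}(T)$. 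Both are valid, both rest on the conditional independence of $\theta_i(t)$ and $\theta_j(t)$, and your counting observation (deleting both $i$ and $j$ in $\EC_i(t)$ forces $\theta_j(t)\ge\nu$ on a joint draw) is exactly the paper's; your version trades the stopping-time bookkeeping for a short verification that $x\mapsto\Prob[\EA_i(t)\mid\theta_j(t)=x,\Fil_t]$ is non-increasing, and makes the ``$O(1/t)\times O(1/t)$'' heuristic literal.

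The gaps are in (A) and (B). For (A), after the union bound you are left with $\sum_t\Ind\{\theta_k(t)<\mu_L^{(-)}\}$ for an optimal arm $k$, and you propose to sum Beta lower tails over the draw count $s$. This does not close: for fixed $s$ the event $\{N_k(t)=s\}$ can persist for arbitrarily many rounds, and nothing in your sketch advances $N_k$; in the extreme, an arm whose samples keep coming out low is never selected, its per-round undershoot probability stays bounded away from $0$, and the count is $\Theta(T)$. What controls the dwell time at each $s$ is the companion event, which the paper deliberately retains through the union bound, namely that fewer than $L$ of the \emph{other} arms have samples above $\mu_L^{(-)}$ --- under that event, $\theta_k(t)\ge\mu_L^{(-)}$ would force a draw of $k$, so the number of offending rounds at level $s$ is geometric with parameter $p_{k,s}(\mu_L^{(-)})$ and Lemma \ref{lem:bernoullideviatebound} bounds $\Expect[1/p_{k,s}]$; this is precisely condition (i) of Lemma \ref{lem_theta_gen}. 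Your reduction discards that event, so the subsequent geometric series is unjustified. The same omission is more consequential in (B): there the companion event needed to guarantee that an underestimated $k\in[L-1]$ would be drawn must also control arm $i$'s own sample (otherwise $i$, $j$, and $L-2$ other arms can fill all $L$ slots), and $\{\theta_i(t)\le\nu\}$ holds only with probability $1-(\log T)^{-1}$, and only once $N_i(t)$ exceeds $\log\log T/d(\mu_L,\nu)$ with $\hatmu_i(t)\le\mu_L$. The $O(\log\log T)$ is exactly the cost of peeling off those early draws of arm $i$ before invoking Lemma \ref{lem_theta_gen} with $q=1-(\log T)^{-1}$ and $N_c=\Theta(\log T)$; attributing it to arm $k$ being lightly sampled points at the wrong arm and leaves the key quantitative step unargued.
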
%
\vspace{-1em}
The proof of Lemma \ref{lem:fourterms} is in Appendix \ref{subsec:forterms}.
Lemma \ref{lem:fourterms} states that terms (A), (B), and (D) are $O(1/\epsilononed^2)$.
Moreover, the following lemma bounds term (C).
\begin{lemma} {\rm (Asymptotic convergence of $\epsilon_2$-dependent factor)}
By choosing an $O((\log\log T)/\log T)$ value of $\epsilon_2$,
 we obtain $\Expect[\mathrm{(C)}] = O(\log{\log{T}})$.
\label{lem:termcconvergence}
\end{lemma}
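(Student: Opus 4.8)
The plan is to optimize the free parameter $\epsilon_2$ in the bound on $\Expect[\mathrm{(C)}]$ supplied by Lemma~\ref{lem:fourterms}. Regarding the expected rewards $\{\mu_i\}$ as constants, the gaps $\Delta_{i,L}$, $\Delta_{L,L-1}$ and the divergence $d(\mu_i,\mu_L)$ are all constants independent of $T$, and the index set $\lisubopts$ has at most $K$ elements; hence it suffices to control a single summand, since the sum over $j$ only costs a constant multiplicative factor. Writing $c=\Delta_{L,L-1}^2/8$, each summand is a constant multiple of
\[
  \epsilon_2 \log T + 4\, T^{-c\epsilon_2}\log T .
\]
The two terms pull $\epsilon_2$ in opposite directions: the first wants $\epsilon_2$ small, while the second wants $\epsilon_2$ large, so the task reduces to balancing them.

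First I would substitute the explicit choice $\epsilon_2 = \frac{1}{c}\cdot\frac{\log\log T}{\log T}$, which is indeed $O((\log\log T)/\log T)$ and which tends to $0$, so that the ``sufficiently small $\epsilon_2$'' hypothesis of Lemma~\ref{lem:fourterms} is satisfied for all large $T$. With this choice the first term becomes $\epsilon_2\log T = \frac{1}{c}\log\log T = O(\log\log T)$. For the second term I would rewrite the power as $T^{-c\epsilon_2} = \exp(-c\epsilon_2\log T) = \exp(-\log\log T) = 1/\log T$, so that $4\,T^{-c\epsilon_2}\log T = 4 = O(1)$. Summing the two gives $O(\log\log T)$ per summand; multiplying by the constant number of indices $j\in\lisubopts$ and by the bounded factor $1/d(\mu_i,\mu_L)$ then yields $\Expect[\mathrm{(C)}] = O(\log\log T)$, which is the claim.

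The computation is essentially mechanical once the balancing point is identified, so I do not anticipate a genuine obstacle: Lemma~\ref{lem:fourterms} has already isolated the $\epsilon_2$-dependence, and the remaining dependence on the instance is only through constants. The single point requiring mild care is the conversion of the power $T^{-c\epsilon_2}$ into a factor polynomial in $\log T$ under the chosen scaling of $\epsilon_2$, and verifying that the constant $c$ is inserted so that the exponent makes the second term $O(1)$ rather than letting it grow. A slightly more refined calculus optimization over $\epsilon_2$ gives the minimizer $\epsilon_2 = \frac{\log(4c\log T)}{c\log T}$, but this is not needed, as the clean explicit choice above already delivers the stated $O(\log\log T)$ bound.
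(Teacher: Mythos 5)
Your proposal is correct and matches the paper's own argument: the paper likewise sets $\epsilon_2=(\log\log T)/(a\log T)$ for the exponent constant $a=\Delta_{L,L-1}^2/8$, so that $T^{-a\epsilon_2}=1/\log T$ kills the second term while the first contributes $O(\log\log T)$ after multiplying by $\log T$. The only cosmetic difference is that the paper phrases it as bounding $\inf_{\epsilon_2>0}\{T^{-a\epsilon_2}/b+\epsilon_2\}$ by $O((\log\log T)/\log T)$ before multiplying through by $\log T$, which is the same computation.
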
%
The proof of Lemma \ref{lem:termcconvergence} is in Appendix \ref{subsec:proofe2lim}.
Now it suffices to evaluate
$N_i^{\mathrm{suf}}(T) = \frac{\log{T}}{d(\mu_i^{(+)}, \mu_L^{(-)})}$
to complete the proof.
From the convexity of KL divergence
there exists a constant $c_i = c_i(\mu_i, \mu_L)>0$ such that
\begin{align*}
d(\mu_i^{(+)}, \mu_L^{(-)})
=
d(\mu_i+\epsilononed, \mu_L-\epsilononed)
\ge (1-c_i\epsilononed)d(\mu_i, \mu_L)
\end{align*}
and therefore
\begin{align*}
\lefteqn{
\Expect[\Regret(T)] \hspace{-0.3em} \leq \hspace{-1em} \sum_{i \in \subopts} \hspace{-1em} \Expect[\Regret_i(T)] \leq \hspace{-1em} \sum_{i \in \subopts} \hspace{-1em} \Expect\left[ \sum_{t=1}^T \Ind\{\EA_i(t)\} \Delta_i(t)  \right]
} \nn
 & \leq  \sum_{i \in \subopts} \hspace{-1em} \left\{ \Expect\left[\mathrm{(A)} + \mathrm{(B)} + \mathrm{(C)} + \mathrm{(D)}\right] + N_i^{\mathrm{suf}}(T)\Delta_{i,L} \right\} \nn
 & \leq \underbrace{ \hspace{-0.7em}  \sum_{i \in \subopts} \frac{\Delta_{i,L} \log{T}}{(1-c_i \epsilononed)  d(\mu_i, \mu_L)} }_{\text{main term}}  + \underbrace{ O\left(\frac{1}{\epsilononed^2}\right) }_{C_a} + \underbrace{ O(\log{\log{T}})  }_{C_b}. 
\end{align*}
Since $(1-c_i\epsilononed)^{-1}\le 1+2c_i\epsilononed$ for $c_i\epsilononed\le 1/2$,
we complete the proof of Theorem \ref{thm:mainregret} by letting
$\epsilonOne<1/2$ and $\epsilononed=\epsilonOne/\max_{i\in\subopts}c_i=\Theta(\epsilonOne)$.
\qed

\begin{figure*}[t!]
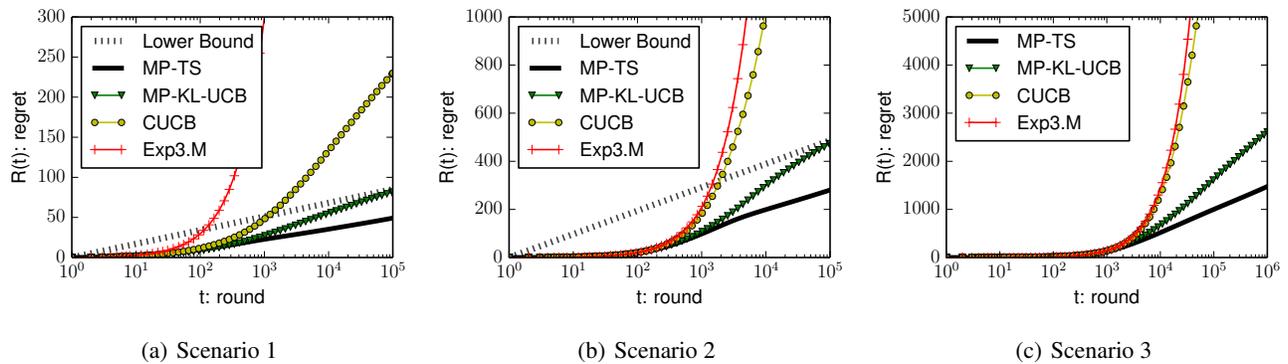

\vspace{-0.5em}
\begin{center}
  \setlength{\subfigwidth}{.33\linewidth}
  \addtolength{\subfigwidth}{-.33\subfigcolsep}
  \begin{minipage}[t]{\subfigwidth}
  \centering
 \subfigure[Scenario 1]{\includegraphics[scale=0.6]{images/regret_5armed.pdf}}
 \end{minipage}\hfill
  \begin{minipage}[t]{\subfigwidth}
  \centering
 \subfigure[Scenario 2]{\includegraphics[scale=0.6]{images/regret_20armed.pdf}}
  \end{minipage}\hfill
  \begin{minipage}[t]{\subfigwidth}
  \centering
 \subfigure[Scenario 3]{\includegraphics[scale=0.6]{images/regret_large.pdf}}
  \end{minipage}
\end{center}
\vspace{-1em}
  \caption{Regret-round plots of algorithms. The regret in Scenarios 1 and 2 are averaged over $10,000$ runs, and the regret in Scenario 3 is averaged over $1,000$ runs. ``Lower Bound'' is the leading $\Omega(\log{T})$ term of the RHS of inequality \eqref{ineq:multiregretlower}. We do not show Lower Bound in Scenario 3 because the coefficient of the bound can sometimes be quite large (i.e., in some runs, $1/d(\mu_{L+1}, \mu_L)$ is large).}
 \label{fig:regret}
\vspace{-1em}
\end{figure*}%

\vspace{-1em}
\section{Experiment}
\label{sec:experiment}
\vspace{-0.5em}

We ran a series of computer simulations\footnote{The source code of the simulations is available at https://github.com/jkomiyama/multiplaybanditlib.} to clarify the empirical properties MP-TS. The simulations involved the following three scenarios. In Scenarios 1 and 2, we used fixed arms similar to that of \citet{GarivierKLUCB}, and  Scenario 3 is based on a click log dataset of advertisements on a commercial search engine.

\vspace{-0.5em}
\textbf{Algorithms:}
the simulations involved MP-TS, Exp3.M \cite{DBLP:conf/alt/UchiyaNK10}, CUCB \cite{weichencmab}, and MP-KL-UCB. Exp3.M is a state-of-the-art adversarial bandit algorithm for the MP-MAB problem\footnote{Note that, Exp3.M is designed for the adversarial setting in which the rewards of arms are not necessarily stationary.}.
The learning rate $\gamma$ of Exp3.M is set in accordance with Corollary 1 of \citet{DBLP:conf/alt/UchiyaNK10}.
 Note that the CUCB algorithm in the MP-MAB problem at each round draws the top-$L$ arms of the UCB indices $\hatmu_i + \sqrt{(3 \log{t})/(2 N_i(t))}$. MP-KL-UCB is the algorithm that selects the top-$L$ arms in accordance with the KL-UCB index $\sup_{q \in [\hatmu_i(t), 1]} \left\{q | N_{i}(t) d(\hatmu_i(t), q) \leq \log{t}  \right\}$.
\vspace{-0.5em}

\textbf{Scenario 1 (5-armed bandits): }
the simulations include 5 Bernoulli arms with $\{\mu_1,\dots,\mu_5\} = \{0.7,0.6,0.5,0.4,0.3\}$, and $L=2$.
\vspace{-0.5em}

\textbf{Scenario 2 (20-armed bandits): }
the simulations include 20 Bernoulli arms with $\mu_1=0.15$, $\mu_2=0.12$, $\mu_3=0.10$, $\mu_i=0.05$  for $i \in \{4,5,\dots,12\}$, $\mu_i=0.03$ for $i \in \{13,14,\dots,20\}$, and $L=3$. 
\vspace{-0.5em}

\textbf{Scenario 3 (many-armed bandits, online advertisement based CTRs): }
we conducted another set of experiments with arms whose expectations were based on the dataset provided for KDD Cup\footnote{https://www.kddcup2012.org/} 2012 track 2. The dataset involves a click log on soso.com (a large-scale search engine serviced by Tencent), which is composed of 149 million impressions (view of advertisements). We processed the data as follows. First, we excluded users of abnormally high click probability (i.e., users who had more than $1,000$ impressions and more than $0.1$ click probability) from the log. We also excluded minor advertisements (ads) that had less than $5,000$ impressions.
There are a wide variety of ads on a search engine (e.g., "rental cars", "music", etc.) and randomly picking ads from a search engine should yield a set of irrelevant ads. To address this issue, we selected popular queries that had more than $10^4$ impressions and  more than $50$ ads that appeared on the query. As a result, $80$ queries were obtained. 
The number of ads associated with each query ranged from $50$ to $105$, and the average click-through-rate (CTR, the probability that the ad is clicked) of an ad on each query ranged from 1.15\% to 6.86\%. After that, each ad was converted into a Bernoulli arm with its expectations corresponding to the CTR of the ad. 
At the beginning of each run, one of the queries was randomly selected, and the bandit simulation with the arms corresponding to the query and $L=3$ is then conducted.
This scenario was more difficult than the first two scenarios in the sense that 1) a larger number of arms were involved and 2) the reward gap among arms was very small.

The simulation results are shown in Figure \ref{fig:regret}.
In all scenarios, the tendency is the same: our proposed MP-TS performs significantly better than the other algorithms. 
 MP-KL-UCB is not as good as MP-TS, but clearly better than CUCB and Exp3.M.
While it is unclear whether the slope of the regret of MP-KL-UCB converges to the asymptotic bound or not, the slope of the regret of TS quickly approaches the asymptotic lower bound.

\subsection{Improvement of MP-TS based on the empirical means}

\begin{figure}[h]
\vspace{-1.0em}
 \begin{center}
 \includegraphics[scale=0.55]{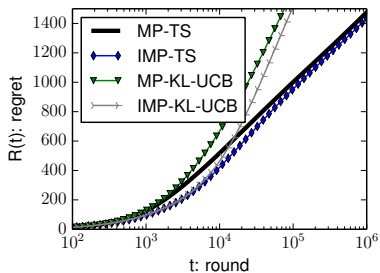}
 \end{center}
\vspace{-1.6em}
 \caption{Before/after comparison of MP-TS. All settings (except for algorithms) are the same as that of Scenario 3.}
 \label{fig:regret_imp}
\vspace{-0.6em}
\end{figure}%
We now introduce an improved version of MP-TS (IMP-TS).
In the theoretical analysis of the MP-MAB problem,
we observed that an extra loss arises
when multiple suboptimal arms are drawn at the same round.
Based on this observation,
the new algorithm selects $L-1$ arms on the basis of empirical averages and selects the last arm on the basis of TS to avoid simultaneous draws of suboptimal arms.
In other words,
this algorithm is further aimed to minimize the regret by purely exploiting the knowledge in the top-$(L-1)$ arms; thus, limiting the exploration to only one arm.
One might fear that this increase in exploitation could devastate
the balance between exploration and exploitation.
Although we provide no regret bound for the improved version of the algorithm,
we expect that this algorithm will also achieve the asymptotic bound for the following reason.
When we restrict the exploration to one arm,
the number of opportunities for an arm to be explored may decrease, say, from $T$ to $T/L$.
Still, $T/L$ opportunities are sufficient since $O(\log (T/L))=O(\log T)$.
In fact, the algorithm proposed by \citet{anantharam1987asymptotically} achieves
the asymptotic bound even though $L-1$ arms are selected based on
empirical means as in IMP-TS.
Similarly, we define an improved version of MP-KL-UCB (IMP-KL-UCB) for selecting the first $L-1$ arms on the basis of empirical averages. 
The before/after analysis of this improvement is shown in Figure \ref{fig:regret_imp}.
 One sees that, (i) MP-TS still performs better than IMP-KL-UCB, and (ii) IMP-TS reduces the regret throughout the rounds. In particular, when the number of the rounds is small ($T \sim 10^3$--$10^4$), the advantage of IMP-TS is large.

\vspace{-1em}
\section{Discussion}
\vspace{-0.6em}

We extended TS to the multiple-play setting and proved its optimality in terms of the regret. 
We considered the case in which the total reward is linear to the individual rewards of selected arms.
The analysis in this paper fully uses the independent property of posterior samples and paves the way to obtain a tight analysis on the multiple-play regret that depends on the combinatorial structure of arm selection.
We now point out two promising directions for future work.
\begin{itemize}
\vspace{-0.5em}
\item \textbf{Position-dependent factors for online advertising:} it is well-known that the CTR of an ad is dependent on its position. Taking the position-dependent factor into consideration changes the MP-MAB problem from the $L$-set selection problem to the $L$-sequence selection problem in which the position of $L$ arms matters.
For the starting point, we consider an extension of MP-TS for the cascade model \cite{DBLP:conf/wine/KempeM08,DBLP:conf/wine/AggarwalFMP08} that corrects position-dependent bias in Appendix \ref{sec:onlinead}.
\vspace{-0.5em}
\item \textbf{Non-Bernoulli distributions for general problems:} for the ease of argument, we exclusively consider the binary rewards.
The analysis by \citet{DBLP:conf/nips/KordaKM13} is useful in extending our result to the case of the 1-d exponential families of rewards. 
Moreover, extending our result to multi-parameter reward distributions \cite{BurKat96,DBLP:conf/aistats/HondaT14} is interesting.
\vspace{-0.5em}
\end{itemize}

\clearpage

\section*{Acknowledgements}

We gratefully acknowledge the insightful advice from Issei Sato and Tao Qin. We thank Yingce Xia for discussion on the evaluation of \eqref{ineq:boundtermb}. We thank Bertrand Chapleau for discussion on Bias-corrected MP-TS. We thank Zhibing Zhao for pointing out our notation error in the arXiv version. We thank the anonymous reviewers in ICML2015 for their useful comments. This work was supported in part by JSPS KAKENHI Grant Number 26106506.

\bibliographystyle{icml2015}
%bibliography{../bibs/manual}
\bibliography{manual}
\clearpage

\appendix
\section{Appendix}

\subsection{Cases of several arms having the same expectation}
\label{subsec:sameexpectations}

Up to now, we have assumed that all arms have distinct expectations. Here, we consider cases in which some arms have the same expectations. Without loss of generality, we assume $\mu_1 \geq \mu_2 \geq,\dots,\geq \mu_K$. Let us call arms with a larger expectation than $\mu_L$ ``strictly optimal'' arms, arms with the same expectation as $\mu_L$ ``marginal'' arms, and arms with a smaller expectation than $\mu_L$ ``strictly suboptimal'' arms. Each arm is either strictly optimal, marginal, or strictly suboptimal.

\noindent
\textbf{Case 1:} Assume that all strictly optimal arms are distinct, that there is only one marginal arm, and that there are several strictly suboptimal arms with the same expectation. In this case, the regret bound of Theorem \ref{thm:mainregret} holds because our analysis deals with each suboptimal arm separately.

\noindent
\textbf{Case 2:} Assume that there is only one marginal arm, that all strictly suboptimal arms are distinct, and that there are several strictly optimal arms with the same expectation. The regret bound also holds in this case since there is a gap between each strictly suboptimal arm and each strictly optimal arm.

\noindent
\textbf{Case 3:} Assume that all strictly optimal arms and strictly suboptimal arms are distinct and that there are several marginal arms with the same expectation. Unfortunately, we were unable to perform a meaningful analysis in this case. Intuitively, as stated by Agrawal and Goyal \cite{DBLP:journals/jmlr/AgrawalG12} for SP-MAB, adding an additional marginal arm appears to require some extra exploration, which slightly increases the regret. However, the regret structure is more complex than the SP-MAB because several marginal arms can be drawn simultaneously.

In summary, our Theorem \ref{thm:mainregret} holds when the marginal arm is distinct. That is, $\mu_1 \geq \mu_2 \geq \dots \geq \mu_{L-1} > \mu_L > \mu_{L+1} \geq \dots \geq \mu_{K}$.
 
\subsection{Cascade model and position-dependent MP-MAB problem}

\begin{algorithm}[t]
 \caption{Bias-Corrected Multiple-play Thompson sampling (BC-MP-TS) for binary rewards}
 \label{alg:bcmpts}
\begin{algorithmic}
   \STATE Input: \# of arms\,$K$, \# of positions\,$L$, discount factors\,$\{\gamma_l(i)\}$
   \FOR{$i = 1,2,\dots,K$}
     \STATE $A_{i}, N_{i} = 1, 2$
   \ENDFOR 
   \STATE $t \leftarrow 1$.
   \FOR{$t = 1,2\dots,T$}
     \FOR{$i = 1,2,\dots,K$}
       \STATE $B_i \leftarrow  \max{(N_{i} - A_{i}, 1)} $
       \STATE $\thetai(t) \sim \Beta(A_i, B_i)$ 
     \ENDFOR
       \STATE Select $I_l(t)$ $(l=1,\dots,L)$ in accordance with Section \ref{subsec:optimalplacement}.
       \FOR{$l \in 1,2,\dots,L$}
         \IF{$X_{i}(t) = 1$}
           \STATE $A_{i} \leftarrow A_{i} + 1$
         \ENDIF
         \STATE $N_{i} \leftarrow N_{i} + \prod_{l'=2}^{l} \gamma_{l'} (I_{l'-1}(t))$
       \ENDFOR
   \ENDFOR
\end{algorithmic}
\end{algorithm}

\label{sec:onlinead}
In the main paper, we assumed that the rewards of arms
 are independently and identically drawn from individual distributions.
In this section, we relax this assumption and consider a wider class of the MP-MAB problem.
Remember that, one of our primary applications is multiple advertisement placement in the online advertising problem (c.f., Example 1). In this section, we interchangeably use the terms an advertisement (ad) and an arm.
It is known that the CTR of an ad depends on the environment where the ad is placed, especially on the position of the ad.
Among several models that explain this dependency on the position, the model that explains human behavior and agrees well with real data \cite{DBLP:conf/wsdm/CraswellZTR08} is the \textit{cascade} model \cite{DBLP:conf/wine/KempeM08,DBLP:conf/wine/AggarwalFMP08}, with which it is assumed that the user scans the ads from top to bottom.
Following \citet{DBLP:conf/aamas/GattiLT12}, we define the discount factor $\gamma_l(i)$ for $l\geq2$ as the probability that a user observing ad $i$ in position $l-1$ will observe the ad in the next position.
Namely, the MP-MAB problem with a discount factor is defined as a MP-MAB problem in which the arm at position $l$ yields reward $1$ with probability $\left( \prod_{l'=2}^{l} \gamma_{l'} (I_{l'-1}(t)) \right) \mu_{I_l(t)}$, where $I_l(t)$ be the arm placed at the $l$-th position at round $t$.
 Note that, when we set $\gamma_l(i) = 1$ for any position $l \in [L]$ and ad $i$, this model is reduced to the model we considered in the main paper.
In the MP-MAB problem in the main paper, the order of the $L$ arms does not matter. Whereas, under a position-dependent discount factor smaller than $1$, the order of $L$ arms matters: the problem is not the selection of an $L$-set of arms, but an $L$-sequence of arms.

\subsubsection{Thompson sampling for cascade model}

 In the cascade model, there is some probability that the arm at position $l>1$ is not drawn.
The probability that the arm at position $l$ is drawn, $\prod_{l'=2}^{l} \gamma_{l'} (I_{l'-1}(t))$, can be considered as the \textit{effective number of the draws} at position $i$.
MP-TS (Algorithm \ref{alg:mpts}) keeps $A_i$ and $B_i$, which respectively correspond to the number of rewards $1$ and $0$. The number of draws on the arm $i$ is $N_i = A_i + B_i$.
When we consider the cascade model, we need to take the effective number of draw into consideration. 
We introduce Bias-corrected MP-TS (BC-MP-TS, Algorithm \ref{alg:bcmpts}).
The crux of BC-MP-TS is that, for each arm that is selected, $N_i$ should be increased not by $1$, but by the effective number of draw for each position.
Note that, when $\gamma_l(i) = 1$, BC-MP-TS is essentially the same as MP-TS.

\subsubsection{Optimal arm selection and the regret}
\label{subsec:optimalplacement}

In general discount factor $\gamma_l(i)$, even if we have perfect information over the expectation of all arms $\{\mu_i\}_{i=1}^K$, the computation of the optimal sequence of $L$-arms at each round $t$ (optimal arm selection) appears to be computationally intractable when $K$ is large because we need to search all the possible allocation of $K$ ads over $L$ positions. In the case where $\gamma_l(i) = \gamma(i)$, \citet{DBLP:conf/wine/KempeM08} proposed a polynomial-time approximation of the optimal arm selection.
We can obtain the arm selection strategy for BC-MP-TS by using this approximation algorithm as an oracle and plugging $\{\thetai(t)\}_{i=1}^L$ as estimated expected rewards.

\textbf{Ad-independent discount factor:}
when the discount factor is independent of the ad at that position (i.e., $\gamma_l(i) = \gamma_l$), the optimal arm selection is easy: just select $\mu_l$ (i.e., $l$-th best arm) on the $l$-th position. 
We define the arm selection strategy of BC-MP-TS as placing the arm of the $l$-th largest $\theta_i$ (i.e., $I_l(t) = \tmmax{i \in [K]}{l} \thetai$) on the $l$-th position.

\textbf{Regret:}
naturally, the regret per round is defined as the difference between the expected reward of the optimal arm selection and that of an algorithm. Namely,
\begin{multline*}
\Regret(T)
 = \sum_{t=1}^T  \sum_{l=1}^L \Biggl( \prod_{l'=2}^{l} \gamma_{l'} (I_\text{opt}(l'-1)) \mu_{I_\text{opt}(l)}  \\ - \underbrace{\prod_{l'=2}^{l} \gamma_{l'} (I_{l'-1}(t))}_{\text{effective number of draw at position $l$}} \times \mu_{I_l(t)} \Biggr),
\label{def_regret_cascade}
\end{multline*}
where $(I_\text{opt}(1),\dots,I_\text{opt}(L))$ is the optimal arm selection.
In the case of the ad-independent discount factor, we conjecture that the regret lower bound should be identical to the case of no-discount factor that we analysed in the main paper (i.e., inequality \eqref{ineq:multiregretlower}). Although we do not prove any regret bound for this cascade model, the conjecture is supported by the fact that (i) by identifying the top-$L$ arm we immediately obtain the optimal arm selection, (ii) algorithms should require $\log{T}/d(\mu_i, \mu_L)$ number of effective draws to convince that suboptimal arm $i>L$ is not as good as arm $L$, and (iii) the best situation is that the simultaneous draw of several optimal arms rarely occurs: arm $L$ is pushed out instead of arm $i$, and the regret increase per an effective draw is $\mu_L - \mu_i$.
In the case of the general discount factor, the problem is subtler because a slight difference in $\{\mu_i\}$ can change the optimal arm selection.

\subsubsection{Experiment of cascade model}

\begin{figure}[t]
 \begin{center}
 \includegraphics[scale=0.6]{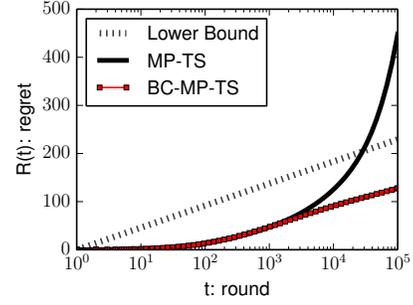}
 \end{center}
\vspace{-1.5em}
 \caption{Simulation with a discount factor. Lower Bound is the leading $\Omega(\log{T})$ term of the RHS of inequality \eqref{ineq:multiregretlower}, which we have conjectured to be the lower bound for the cascade model with the ad-independent discount factor in Section \ref{subsec:optimalplacement}. The regret is averaged over $10,000$ runs.}
 \label{fig:regret_cascade_4}
\vspace{-1em}
\end{figure}%
This simulation adapts the cascade model and involves a constant discount factor $\gamma_l(i) = 0.7$ for any position and arm. There are $9$ Bernoulli arms with $\mu_1=0.24,\mu_2=0.21,\dots,\mu_9=0.00$ and $L=3$.
In this case the optimal arm selection strategy is to choose $\{I_1(t), I_2(t), I_3(t)\}=\{\mu_1, \mu_2, \mu_3\}$ (c.f., Section \ref{subsec:optimalplacement}).
The regret of the algorithms is shown in \ref{fig:regret_cascade_4}. On one hand, MP-TS failed to have a small regret due to its ignorance to the discount factors. On the other hand, the slope of BC-MP-TS quickly approaches the conjectured Lower Bound, which is  empirical evidence of the ability of BC-MP-TS to correct the position-dependent bias. 

\subsection{Key fact and lemmas}

\begin{fact} {\rm (Chernoff bound for binary random variables)}

Let $X_1,\dots,X_n$ be i.i.d.\,binary random variables.
Let $\hat{X} = \frac{1}{n}\sum_{i=1}^n X_i$ and $\mu = \Expect[X_i]$.
Then, for any $\epsilon \in (0, 1-\mu)$,
\begin{equation*}
  \Prob( \hat{X} \geq \mu + \epsilon ) \leq \exp{\left( - d(\mu+\epsilon, \mu) n \right)}.
\end{equation*}
and, for any $\epsilon \in (0, \mu)$,
\begin{equation*}
  \Prob( \hat{X} \leq \mu - \epsilon ) \leq \exp{\left( - d(\mu-\epsilon, \mu) n \right)}.
\end{equation*}
\label{fact:chernoff}
\end{fact}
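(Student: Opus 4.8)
The plan is to establish both tail inequalities by the classical Chernoff (exponential Markov) method, since the stated right-hand sides are precisely the large-deviation rate functions that this method produces for Bernoulli variables. I would handle the upper tail in full and then recover the lower tail by a symmetry argument, so that no computation is repeated.

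For the upper tail, fix $\epsilon \in (0, 1-\mu)$ and write $a = \mu + \epsilon$. For any $\lambda > 0$, I would apply Markov's inequality to $\e^{\lambda n \hat{X}}$ and factor the moment generating function using the independence of $X_1,\dots,X_n$:
\[
\Prob(\hat{X} \ge a) = \Prob\!\left(\e^{\lambda n \hat{X}} \ge \e^{\lambda n a}\right) \le \e^{-\lambda n a}\, \bigl(\Expect[\e^{\lambda X_1}]\bigr)^n.
\]
Since each $X_i$ is Bernoulli with mean $\mu$, its moment generating function is $\Expect[\e^{\lambda X_1}] = 1 - \mu + \mu \e^{\lambda}$, so the bound takes the form $\exp(n\, g(\lambda))$ with $g(\lambda) = -\lambda a + \log(1 - \mu + \mu \e^{\lambda})$.

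The key step is to optimize $g$ over $\lambda > 0$. Solving $g'(\lambda) = 0$ reduces to $\mu \e^{\lambda}/(1 - \mu + \mu \e^{\lambda}) = a$, whose solution is $\e^{\lambda^*} = a(1-\mu)/(\mu(1-a))$; because $a > \mu$ and $a < 1$, this $\lambda^*$ is strictly positive and hence admissible. Substituting $\lambda^*$ back, I would use $1 - \mu + \mu \e^{\lambda^*} = (1-\mu)/(1-a)$, after which a short simplification collapses the exponent to exactly $g(\lambda^*) = -\bigl[a \log(a/\mu) + (1-a)\log((1-a)/(1-\mu))\bigr] = -d(\mu+\epsilon, \mu)$. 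This yields $\Prob(\hat{X} \ge \mu + \epsilon) \le \exp(-d(\mu+\epsilon,\mu)\, n)$, as claimed.

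For the lower tail with $\epsilon \in (0,\mu)$, rather than repeat the optimization I would apply the upper-tail bound to the complementary variables $Y_i = 1 - X_i$, which are i.i.d.\ Bernoulli with mean $1-\mu$ and empirical mean $1 - \hat{X}$. Since $\{\hat{X} \le \mu - \epsilon\}$ equals $\{1 - \hat{X} \ge (1-\mu)+\epsilon\}$, the upper-tail bound gives $\exp(-d((1-\mu)+\epsilon,\, 1-\mu)\, n)$, and the elementary identity $d(p,q) = d(1-p,1-q)$ (immediate from the definition of $d$) rewrites this exponent as $-d(\mu-\epsilon,\mu)\, n$, completing the argument. The only nonroutine ingredient is the algebraic verification that the optimized Chernoff exponent coincides with the KL divergence $d$; everything else is standard bookkeeping, so I anticipate no genuine obstacle beyond keeping that simplification clean.
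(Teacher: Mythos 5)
Your argument is correct: the exponential-Markov bound with the Bernoulli moment generating function, optimized at $\e^{\lambda^*} = (\mu+\epsilon)(1-\mu)/(\mu(1-\mu-\epsilon))$, does collapse to the exponent $-d(\mu+\epsilon,\mu)n$, and the reduction of the lower tail to the upper tail via $Y_i = 1-X_i$ together with $d(p,q)=d(1-p,1-q)$ is valid. Note, however, that the paper states this as a known Fact and supplies no proof at all, so there is nothing to compare against; your derivation is simply the standard Chernoff--Hoeffding argument, carried out correctly and in full.
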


\begin{fact} {\rm (Beta-Binomial equality)}
Let $F_{\alpha,\beta}^{\mathrm{beta}}(y)$ be the cdf of the beta distribution with integer parameters $\alpha$ and $\beta$. Let $F_{n,p}^{\mathrm{B}}(\cdot)$ be the cdf of the binomial distribution with parameters $n$, $p$. Then,
\begin{equation*}
  F_{\alpha,\beta}^{\mathrm{beta}}(y) = 1 - F_{\alpha+\beta-1,y}^{\mathrm{B}}(\alpha-1), 
\end{equation*}
\end{fact}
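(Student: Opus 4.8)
The plan is to recognize the left-hand side as the regularized incomplete beta function and then to read both sides as two evaluations of a single probability attached to uniform order statistics. Writing out the density, the beta cdf is
\[
F_{\alpha,\beta}^{\mathrm{beta}}(y) = \frac{1}{B(\alpha,\beta)}\int_0^y t^{\alpha-1}(1-t)^{\beta-1}\,dt,
\]
and since $\alpha$ and $\beta$ are positive integers, $n := \alpha+\beta-1$ is a nonnegative integer, so the binomial cdf $F_{n,y}^{\mathrm{B}}(\alpha-1)=\sum_{k=0}^{\alpha-1}\binom{n}{k}y^k(1-y)^{n-k}$ on the right is well defined. The target identity is thus equivalent to $F_{\alpha,\beta}^{\mathrm{beta}}(y)=\sum_{k=\alpha}^{n}\binom{n}{k}y^k(1-y)^{n-k}$.

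First I would introduce the probabilistic model: let $U_1,\dots,U_n$ be i.i.d.\ uniform on $[0,1]$ with $n=\alpha+\beta-1$, and let $U_{(\alpha)}$ denote the $\alpha$-th smallest among them. A standard computation of the density of the $\alpha$-th order statistic shows $U_{(\alpha)}\sim\Beta(\alpha,\,n-\alpha+1)=\Beta(\alpha,\beta)$, so that $F_{\alpha,\beta}^{\mathrm{beta}}(y)=\Prob[U_{(\alpha)}\le y]$. Next I would translate the order-statistic event into a counting event: the $\alpha$-th smallest of the $U_i$ is at most $y$ exactly when at least $\alpha$ of the $n$ variables fall in $[0,y]$. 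Since each $U_i\le y$ independently with probability $y$, the count of such variables is $\mathrm{Binomial}(n,y)$, and therefore
\[
\Prob[U_{(\alpha)}\le y]=\Prob[\mathrm{Binomial}(n,y)\ge\alpha]=1-F_{n,y}^{\mathrm{B}}(\alpha-1),
\]
which is exactly the claimed equality.

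The only step requiring genuine care is the order-statistic distribution claim; I expect this to be the main (and only modest) obstacle, and it is handled either by a direct density computation or by the multinomial argument counting how many $U_i$ land below, at, and above an infinitesimal window around $y$. If a fully self-contained analytic derivation is preferred instead, the alternative is induction on $\beta$ with $n=\alpha+\beta-1$ held fixed: the base case $\beta=1$ gives $I_y(\alpha,1)=y^{\alpha}$, matching the lone term $\binom{n}{n}y^{\alpha}$, and integrating $\int_0^y t^{\alpha-1}(1-t)^{\beta-1}\,dt$ by parts (differentiating the factor $(1-t)^{\beta-1}$) yields, after cancelling the beta-function ratios, the recurrence $I_y(\alpha,\beta)=\binom{n}{\alpha}y^{\alpha}(1-y)^{\beta-1}+I_y(\alpha+1,\beta-1)$. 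Telescoping this recurrence down to $\beta=1$ reassembles the tail sum $\sum_{k=\alpha}^{n}\binom{n}{k}y^k(1-y)^{n-k}$; the bookkeeping of the binomial coefficients across the telescope is the portion demanding attention.
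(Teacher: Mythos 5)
Your proof is correct. Note that the paper offers no proof of this statement at all: it is recorded as a ``Fact'' (alongside the Chernoff bound and Pinsker's inequality) and used as a known identity, so there is no argument in the paper to compare against. Your order-statistics derivation is the standard one and is complete: with $n=\alpha+\beta-1$ the $\alpha$-th order statistic of $n$ i.i.d.\ uniforms is indeed $\Beta(\alpha,n-\alpha+1)=\Beta(\alpha,\beta)$-distributed, and the equivalence $\{U_{(\alpha)}\le y\}=\{\#\{i:U_i\le y\}\ge\alpha\}$ immediately converts the beta cdf into the binomial upper tail $1-F^{\mathrm{B}}_{n,y}(\alpha-1)$. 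Your fallback induction is also sound; the integration-by-parts recurrence $I_y(\alpha,\beta)=\binom{n}{\alpha}y^{\alpha}(1-y)^{\beta-1}+I_y(\alpha+1,\beta-1)$ checks out, since $\tfrac{\beta-1}{\alpha B(\alpha,\beta)}=\tfrac{1}{B(\alpha+1,\beta-1)}$ when $\alpha,\beta$ are positive integers, and the telescope terminates correctly at the base case $\beta=1$. Either argument would serve as a self-contained justification of the Fact; the only point worth making explicit if you write it up is the degenerate reading of $F^{\mathrm{B}}_{n,y}(\alpha-1)$ when $y\in\{0,1\}$, which is immediate.
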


\begin{fact} {\rm (Pinsker's inequality for binary random variables)}
For $p,q \in (0,1)$, the KL divergence between two Bernoulli distributions is bounded as: 
\begin{equation*}
d(p,q) \geq 2 (p-q)^2. 
\end{equation*}
\end{fact}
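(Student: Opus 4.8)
The plan is to fix $p\in(0,1)$ and regard the gap $f(q) = d(p,q) - 2(p-q)^2$ as a function of the single variable $q\in(0,1)$. Since $f(p) = d(p,p) = 0$, it suffices to show that $f$ attains its global minimum over $(0,1)$ at $q=p$; the claimed inequality $d(p,q)\ge 2(p-q)^2$ then follows immediately.

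First I would differentiate $f$ with respect to $q$. Using $\frac{\partial}{\partial q}d(p,q) = -\frac{p}{q} + \frac{1-p}{1-q} = \frac{q-p}{q(1-q)}$, we obtain
\begin{equation*}
f'(q) = \frac{q-p}{q(1-q)} - 4(q-p) = (q-p)\left(\frac{1}{q(1-q)} - 4\right).
\end{equation*}
The crucial elementary observation is that $q(1-q)\le \tfrac14$ for every $q\in(0,1)$, equivalently $(2q-1)^2\ge 0$, so the bracketed factor $\frac{1}{q(1-q)}-4$ is nonnegative throughout $(0,1)$. Consequently $f'(q)$ has the same sign as $q-p$: it is nonpositive on $(0,p)$ and nonnegative on $(p,1)$. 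Hence $f$ is nonincreasing up to $q=p$ and nondecreasing thereafter, so it is minimized at $q=p$, where $f(p)=0$. This yields $f(q)\ge 0$, i.e.\ $d(p,q)\ge 2(p-q)^2$, for all $q\in(0,1)$.

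The argument is entirely elementary, and I anticipate no genuine obstacle; the only step requiring care is the sign analysis of $f'$, which rests solely on the bound $q(1-q)\le 1/4$. It is worth noting that one cannot simply invoke a second-order Taylor expansion of $d(p,\cdot)$ about $q=p$ together with a uniform curvature bound: the second derivative $\frac{p}{q^2}+\frac{1-p}{(1-q)^2}$ is \emph{not} bounded below by $4$ for all $q$ (so $f$ is not globally convex), which means such an approach would require an integral mean-value form and extra bookkeeping. The first-order monotonicity argument sidesteps this difficulty entirely and is the cleanest route, since $f$ is unimodal even though it fails to be convex.
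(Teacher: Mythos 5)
Your argument is correct and complete: the derivative computation $\frac{\partial}{\partial q}d(p,q)=\frac{q-p}{q(1-q)}$ is right, the factorization $f'(q)=(q-p)\bigl(\frac{1}{q(1-q)}-4\bigr)$ together with $q(1-q)\le 1/4$ gives the sign of $f'$, and the monotone-then-unimodal conclusion yields $f\ge f(p)=0$. The paper states this as a standard \emph{Fact} and offers no proof of its own, so there is nothing to compare against; your side remark that $d(p,\cdot)-2(p-\cdot)^2$ fails to be globally convex (e.g.\ $p=0.99$, $q=0.8$ gives second derivative of $d$ below $4$) is accurate and correctly identifies why the first-order monotonicity route is the clean one.
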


\begin{lemma} {\rm (Lemma 2 in \citet{shiprafurther})}
Let $k \in [K]$, $n \geq 0$ and $x < \mu_k$.
Let $\hatmu_{k,n}$ be the empirical average of $n$ samples from $\Bernoulli(\mu_k)$. 
Let $p_{k, n}(x) = 1 - F_{\hatmu_{k,n}n+1,(1-\hatmu_{k,n})n+1}^{\mathrm{beta}}(y)$ be the probability that the posterior sample from the Beta distribution with its parameter $\hatmu_{k,n}n+1, (1-\hatmu_{k,n})n+1$ exceeds $x$.  
Then, its average over runs is bounded as:
\begin{multline*}
  \Expect\left[\frac{1}{p_{k,n}(x)}\right] \leq \\ \begin{cases}
    1 + \frac{3}{\Delta_k(x)} & \hspace{-3em} (\text{$n < 8/\Delta_k(x)$}) \\
    1 + \Theta \Biggl(e^{-\Delta_k(x)^2 n / 2} + \frac{1}{(n+1)\Delta_k(x)^2} e^{-D_k(x) n} \\ \hspace{5em} + \frac{1}{e^{\Delta_k(x)^2 n / 4}-1} \Biggr) & \hspace{-3em} (\text{$n \geq 8/\Delta_k(x)$}),
  \end{cases}
\end{multline*}
where $\Delta_k(x) = \mu_k - x, D_k(x)=d(x, \mu_k)$.
\label{lem:bernoullideviatebound}
\end{lemma}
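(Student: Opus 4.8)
The plan is to reduce the Beta tail probability to a Binomial CDF and then bound the expected reciprocal of that CDF. First I would apply the Beta--Binomial equality with $\alpha = \hatmu_{k,n}n+1$ and $\beta = (1-\hatmu_{k,n})n+1$, so that $\alpha+\beta-1 = n+1$ and
\[
p_{k,n}(x) = 1 - F^{\mathrm{beta}}_{\alpha,\beta}(x) = F^{\mathrm{B}}_{n+1,x}(\hatmu_{k,n}n).
\]
Writing $s = \hatmu_{k,n}n$ for the number of successes, which is distributed as $\mathrm{Bin}(n,\mu_k)$, the quantity to control becomes $\Expect_{s}[1/F^{\mathrm{B}}_{n+1,x}(s)] = \sum_{s} f^{\mathrm{B}}_{n,\mu_k}(s)/F^{\mathrm{B}}_{n+1,x}(s)$, where $f^{\mathrm{B}}_{n,\mu_k}$ is the $\mathrm{Bin}(n,\mu_k)$ pmf. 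Because $x < \mu_k$, the typical value $s \approx \mu_k n$ makes $F^{\mathrm{B}}_{n+1,x}(s)$ close to $1$, so the whole expectation should be $1$ plus a correction, which is exactly the shape of the claimed bounds.

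For the small-sample regime $n < 8/\Delta_k(x)$ I would aim only for the crude uniform estimate $1 + 3/\Delta_k(x)$. Here I would control each summand by comparing the point mass $f^{\mathrm{B}}_{n,\mu_k}(s)$ against the CDF $F^{\mathrm{B}}_{n+1,x}(s)$. The key device is the ratio of consecutive Binomial probabilities, which lets me dominate the tail of the sum by a geometric series whose ratio is governed by the gap $\Delta_k(x)=\mu_k-x$; summing this series produces the $3/\Delta_k(x)$ factor, while the leading $1$ comes from the terms where $F^{\mathrm{B}}_{n+1,x}(s)$ is already near $1$.

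For the large-sample regime $n \ge 8/\Delta_k(x)$ I would split the sum at an intermediate threshold $s_0 \approx x' n$ with $x' \in (x,\mu_k)$ (say $x' = x + \Delta_k(x)/2$). On the bulk part $s \ge s_0$, the Chernoff bound for binary random variables applied to $\mathrm{Bin}(n+1,x)$ shows $F^{\mathrm{B}}_{n+1,x}(s)$ is within an exponentially small amount of $1$, so $1/F^{\mathrm{B}}_{n+1,x}(s) \le 1 + (\text{exp.\ small})$; summing the geometric overshoot, whose per-step ratio is $e^{-\Delta_k(x)^2 n/4}$ after invoking Pinsker's inequality to convert the KL exponent into a quadratic one, yields the $1/(e^{\Delta_k(x)^2 n/4}-1)$ term. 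On the tail part $s < s_0$, the event $\{\mathrm{Bin}(n,\mu_k) < s_0\}$ is itself exponentially rare by Chernoff, giving the $e^{-\Delta_k(x)^2 n/2}$ contribution; the most delicate piece is the boundary term, where $1/F^{\mathrm{B}}_{n+1,x}(s)$ can be as large as $\Theta(1/f^{\mathrm{B}}_{n+1,x}(s))$ but is weighted by the tiny probability mass near $s_0$, and estimating the single dominant Binomial term there via the ratio of consecutive pmf values produces the $\frac{1}{(n+1)\Delta_k(x)^2}e^{-D_k(x)n}$ factor with $D_k(x)=d(x,\mu_k)$.

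The main obstacle is this large-$n$ analysis: matching all three error terms of the $\Theta(\cdot)$ requires simultaneously a sharp near-$1$ estimate of the Binomial CDF in the bulk, a sharp tail estimate for the rare small-$s$ event, and a careful single-term pmf estimate at the transition point $s_0$, each carrying its own exponent ($\Delta_k(x)^2 n/2$, $\Delta_k(x)^2 n/4$, and $D_k(x)n$). Keeping the rare-event weights honest—so that large values of $1/F^{\mathrm{B}}_{n+1,x}$ are always paired with the correspondingly small probability mass—and applying Pinsker only where it does not sacrifice the needed sharpness is where the bookkeeping is most error-prone.
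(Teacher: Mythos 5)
The paper does not actually prove this lemma---it is imported verbatim as Lemma~2 of Agrawal and Goyal (2013) and used as a black box, so there is no in-paper proof to compare against. Your plan (the Beta--Binomial reduction to $\Expect\bigl[1/F^{\mathrm{B}}_{n+1,x}(s)\bigr]$ with $s\sim\mathrm{Bin}(n,\mu_k)$, a geometric-ratio argument in the small-$n$ regime, and a split of the success count at an intermediate threshold for large $n$) is precisely the strategy of that cited proof, and your attribution of each of the three error terms to its regime is essentially correct. The only caveat is that the large-$n$ estimates are gestured at rather than executed; that bookkeeping---keeping each reciprocal CDF paired with the correct exponentially small weight so as to recover the exponents $\Delta_k(x)^2n/2$, $\Delta_k(x)^2n/4$, and $D_k(x)n$ separately---is where the original proof spends all of its effort, but nothing in your outline points in a direction that would fail.
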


In the proof of Lemma \ref{lem:fourterms} 
we use the following Lemmas \ref{lem_theta_gen}, \ref{lem_avgdeviation}, and \ref{lem_chernoffdeviation} several times.
Lemma \ref{lem_theta_gen} is essentially the combination of the existing techniques of \citet{shiprafurther} and \citet{DBLP:conf/aistats/HondaT14}. Lemmas \ref{lem_avgdeviation} and \ref{lem_chernoffdeviation} are also existing techniques that appear in several previous analyses in Bayesian bandits with Bernoulli arms. 

\begin{lemma} \label{lem_theta_gen}
Let $k \in [K]$, $z<\mu_k$ be arbitrary, $\ES(t)$, $\ET(t)$, and $\EU(t)$ be events such that
\begin{itemize}
\item[(i)] if $\{\tilmu_k(t) \geq z\}$, $\ES(t)$, and $\ET(t)$ occurred
then the arm $k$ is drawn at round $t$,
\item[(ii)] $\tilmu_k(t)$, $\ES(t)$ and $\ET(t)$ are mutually independent given $\{\hat{\mu}_i(t)\}_{i=1}^K$ and $\{N_i(t)\}_{i=1}^K$. 
\item[(iii)] The event $\EU(t)$ is deterministic given $\{\hat{\mu}_i(t)\}_{i=1}^K$ and $\{N_i(t)\}_{i=1}^K$.
\item[(iv)] Given $\{\hat{\mu}_i(t)\}_{i=1}^K$ and $\{N_i(t)\}_{i=1}^K$ such that $\EU(t)$ holds, $\ET(t)$ occurs with probability at least $q>0$.
\end{itemize}
Then
\begin{multline*}
  \Expect\left[
 \sum_{t=1}^T \Ind\{\tilmu_k(t) < z, \ES(t), \EU(t), N_k(t)<N_c\}
 \right]\\
=  O\left(\frac{1}{q (\mu_k-z)^2}\right) + N_c \frac{1-q}{q}.
\end{multline*}
In particular, by setting $\ET(t)$ and $\EU(t)$ the trivial events that always hold ($q=1$), we obtain the following inequality: 
\begin{align}
  \Expect\left[
 \sum_{t=1}^T \Ind\{\tilmu_k(t) < z, \ES(t)\}
 \right]
&=  O\left(\frac{1}{(\mu_k-z)^2}\right). \label{ineq_lem_theta}
\end{align}
\end{lemma}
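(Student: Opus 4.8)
The plan is to decompose the sum according to the value $n = N_k(t)$, the number of times arm $k$ has already been pulled, and to exploit the fact that within each such ``level'' the posterior sample $\tilmu_k(t)$ has a fixed conditional distribution (it depends only on $N_k(t)$ and $\hatmu_k(t)$, both frozen while $N_k(t)=n$) and, by assumption (ii), is drawn independently of $\ES(t)$ and $\ET(t)$. Writing $p = p_{k,n}(z) = \Prob[\tilmu_k(t)\ge z \mid \{\hatmu_i(t)\},\{N_i(t)\}]$ as in Lemma \ref{lem:bernoullideviatebound}, assumptions (i)--(iv) say that on any round where $\ES(t)$ and $\EU(t)$ both hold, a pull of arm $k$ is forced whenever $\tilmu_k(t)\ge z$ and $\ET(t)$ occur, an event of conditional probability at least $pq$ by independence and (iv). Thus each $\{\ES,\EU\}$-round acts like an independent Bernoulli trial whose ``success'' (probability $\ge pq$) ends level $n$, and any pull triggered by another mechanism only ends the level sooner, so all the geometric estimates below remain valid upper bounds. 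I write $\Delta = \mu_k - z$.

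First I would split the counted rounds at level $n$ into those where $\ET(t)$ additionally holds and those where it fails. For the $\ET$-rounds, i.e. $\{\tilmu_k(t)<z,\ES(t),\EU(t),\ET(t)\}$, note that among rounds where $\ES,\EU,\ET$ all hold the level ends at the first one with $\tilmu_k(t)\ge z$; since each such round independently has $\tilmu_k(t)\ge z$ with probability $p$, the number preceding it with $\tilmu_k(t)<z$ is stochastically dominated by a geometric variable, giving a conditional expectation at most $(1-p)/p = \Expect[1/p_{k,n}(z)]-1$ after averaging. Summing this over $n$ and invoking Lemma \ref{lem:bernoullideviatebound} yields a contribution of $O(1/\Delta^2)$, with no dependence on $q$; this part alone already reproduces the special case \eqref{ineq_lem_theta} when $\ET$ is trivial.

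For the $\neg\ET$-rounds I would bound $R_n$, the total number of $\{\ES,\EU\}$-rounds before level $n$ ends: since success has probability at least $pq$, $\Expect[R_n]\le 1/(pq)$. On each such round the conditional probability of $\neg\ET(t)$ is at most $1-q$ (as $\EU$ holds), so an optional-stopping (Wald-type) argument bounds the expected number of $\neg\ET$-rounds among the first $R_n$ by $(1-q)\Expect[R_n]\le \frac{1-q}{q}\,\Expect[1/p_{k,n}(z)]$. Because the indicator restricts to $N_k(t)<N_c$, only the levels $n=0,\dots,N_c-1$ contribute, and $\sum_{n=0}^{N_c-1}\Expect[1/p_{k,n}(z)] = \sum_{n=0}^{N_c-1}(\Expect[1/p_{k,n}(z)]-1) + N_c \le O(1/\Delta^2) + N_c$ by Lemma \ref{lem:bernoullideviatebound}. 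Hence the $\neg\ET$ contribution is $\frac{1-q}{q}\bigl(O(1/\Delta^2)+N_c\bigr)$, and adding the two parts gives $\frac{1}{q}O(1/\Delta^2) + \frac{1-q}{q}N_c$, which is exactly the claimed bound since $1+\tfrac{1-q}{q}=\tfrac1q$.

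The main obstacle, I expect, is making the per-level geometric and Wald-type estimates rigorous rather than heuristic. The delicate points are: (a) confirming that, conditioned on $\{\hatmu_i(t)\},\{N_i(t)\}$, the sample $\tilmu_k(t)$ is independent of $\ES(t)$ and $\ET(t)$ and has the frozen Beta law throughout level $n$, so that the trials really are independent (this is where (ii) and (iii) are used); (b) handling the fact that (i) is only a \emph{sufficient} condition for a pull, so I must argue that extraneous pulls only shorten levels and therefore do not violate the upper bounds; and (c) justifying the interchange of stopping time and summation for the $\neg\ET$ count, which I would phrase as optional stopping applied to the martingale formed by the centred $\neg\ET$ indicators against the stopping time $R_n$. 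Once these are in place, the only remaining calculation is the summation over $n$, which is delegated to Lemma \ref{lem:bernoullideviatebound}.
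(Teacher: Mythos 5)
Your proposal is correct and follows essentially the same route as the paper: decompose by the level $n=N_k(t)$, treat each $\{\ES(t),\EU(t)\}$-round as a trial that ends the level with conditional probability at least $p_{k,n}(z)q$ (using the independence in (ii) together with (iv)), bound the per-level count geometrically, and sum $\Expect[1/p_{k,n}(z)]-1$ over $n$ via Lemma \ref{lem:bernoullideviatebound}. Your split into $\ET$-rounds and $\neg\ET$-rounds yields the per-level bound $(1-p)/p+(1-q)/(pq)=(1-pq)/(pq)$, which is exactly the quantity the paper obtains in one step by summing $(1-p_{k,n}(z)q)^m$ over $m$, and the delicate points you flag are handled there by the $\sum_{m}\Ind[m\le\cdot]$ device.
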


\begin{proof}
First we have
\begin{align}
\lefteqn{
 \sum_{t=1}^T \Ind\{\tilmu_k(t) < z, \ES(t), \EU(t), N_k(t)<N_c\} \hspace{-14em}
}\nn
 & \leq \sum_{n=0}^{N_c} \sum_{t=1}^{T} \Ind\{\tilmu_k(t) < z, \ES(t), \EU(t), N_k(t)=n\} \nonumber\\
 & \leq \sum_{n=0}^{ N_c } \sum_{m=1}^T \hspace{-0.1em}\Ind\hspace{-0.1em}\left[m \hspace{-0.1em}\leq\hspace{-0.1em} \sum_{t=1}^{T} \Ind\{ \tilmu_k(t) \hspace{-0.1em}<\hspace{-0.1em} z,
\ES(t), \EU(t), N_k(t)=n\} \right]. \nonumber\\\label{ineq:atwosum}
\end{align}
Here note that the event
\begin{equation*}
 m \leq \sum_{t=1}^{T} \Ind\{ \tilmu_k (t)< z, \ES(t), \EU(t), N_k(t)=n\}
% \label{ineq:mexceeds}
\end{equation*}
implies that the event
\begin{align}
\{\ES(t), \EU(t), N_k(t)=n\}\label{zentei}
\end{align}
occurred
for at least $m$ rounds and
$\{\tilmu_k(t)<z\}$ or $\ET^c(t)$ occurred for the first $m$ rounds
such that \eqref{zentei} occurred.
Thus, by using the mutual independence of $\{\tilmu_k(t)<z\}$, $\ES(t)$, and $\ET(t)$, we have
\begin{multline}
\hspace{-1em}\Pr\left[m \leq \sum_{t=1}^{T} \Ind\{ \tilmu_k(t) < z,
\ES(t), \EU(t), N_k(t)=n\} \Bigg| \hatmu_{k,n} \right]\\
\le (1 - p_{k, n}(z) q)^m \label{ineq:amcont}
\end{multline}
and therefore
\begin{align*}
\lefteqn{
 \Expect\left[ \sum_{t=1}^T \Ind\{\tilmu_k(t) < z, \ES(t), \EU(t), N_k(t)<N_c\}\Bigg| \hatmu_{k,n}\right] 
}\\
&\leq \sum_{n=0}^{ N_c } \sum_{m=1}^T (1 - p_{k, n}(z)q)^m   \text{\hspace{1em}(by (\ref{ineq:atwosum}) and (\ref{ineq:amcont}))} \\
&\leq \sum_{n=0}^{N_c} \frac{1 - p_{k, n}(z) q}{p_{k, n}(z) q} 
 = \frac{1}{q} \sum_{n=0}^{T-1} \left( \frac{1}{p_{k, n}(z)} - 1 \right) + N_c \frac{1-q}{q}. %\label{ineq:amremoved}
\end{align*}
By using Lemma \ref{lem:bernoullideviatebound}, we obtain
\begin{align}
\lefteqn{
\Expect\left[\sum_{n=0}^{T-1} \left( \frac{1}{p_{k, n}(z)} - 1 \right) \right]
}\nn
& \leq \frac{24}{\Delta_k(z)^2}\nn
&+ \hspace{-1em} \sum_{n=\lceil 8/ \Delta_k(z) \rceil}^{T-1} \hspace{-1.2em} O \Biggl(e^{-\Delta_k(z)^2 n / 2}\hspace{-0.1em} + \hspace{-0.1em}\frac{e^{-D_k(z) n}}{(n+1)\Delta_k(z)^2} \hspace{-0.1em}+\hspace{-0.1em} \frac{1}{e^{\Delta_k(z)^2 n / 4}-1} \hspace{-0.1em}\Biggr)\hspace{-0.1em}. \nonumber\\ \label{ineq:agsummation}
\end{align}
By using the fact that $D_k(z) = d(z, \mu_k) = \Omega(1/(\mu_k-z)^2)$ (from the Pinsker's inequality), it is easy to verify that the RHS of (\ref{ineq:agsummation}) is $O(1/(\mu_k-z)^2)$.
By using these facts, we finally obtain 
\begin{align*}
\lefteqn{
\Expect\left[
 \sum_{t=1}^T \Ind\{\tilmu_k(t) < z, \ES(t), \EU(t), N_k(t)<N_c\}
 \right]
}\nn
& \le \frac{1}{q} \Expect \left[
\sum_{n=0}^{T-1} \left( \frac{1}{p_{k, n}(z)} - 1 \right) 
 \right] + N_c \frac{1-q}{q} \nn
&=  O\left(\frac{1}{q (\mu_k-z)^2}\right) + N_c \frac{1-q}{q},
\end{align*}
which concludes the proof of the lemma.
\end{proof}

\begin{lemma} {\rm (Deviation of empirical averages, \citet[Appendix B.1]{shiprafurther})} \label{lem_avgdeviation}
Let $k \in [K]$ and $z > \mu_k$ be arbitrary. Then,
\begin{equation*}
  \Expect\left[ \sum_{t=0}^{\infty} \Ind\{\EA_k(t), \hatmu_k(t) > z \} \right] < 1 + \frac{1}{d(z, \mu_k)}.  
\end{equation*}
\end{lemma}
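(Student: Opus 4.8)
The plan is to reindex the time sum by the number of times arm $k$ has already been drawn, collapsing the randomness to a sum of tail probabilities of a Bernoulli empirical mean, which the Chernoff bound (Fact \ref{fact:chernoff}) controls by a geometric series.

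First I would observe that every round contributing to the sum satisfies $\EA_k(t) = \{k \in I(t)\}$, and on each such round $N_k$ increases by exactly one. Hence, among all rounds in which arm $k$ is drawn, the count $N_k(t)$ attains each nonnegative integer value at most once. Writing $\hatmu_{k,n}$ for the empirical average of the first $n$ i.i.d.\ samples of arm $k$, on a draw-round with $N_k(t)=n$ the left indicator equals $\Ind\{\hatmu_{k,n} > z\}$, so that
\begin{equation*}
\sum_{t=0}^\infty \Ind\{\EA_k(t), \hatmu_k(t) > z\} \leq \sum_{n=0}^\infty \Ind\{\hatmu_{k,n} > z\}.
\end{equation*}

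Next I would take expectations and peel off the $n=0$ term, which is an indicator and hence at most $1$; this single term produces the additive constant in the statement and absorbs any convention for the empirical mean of an undrawn arm. For $n \geq 1$, since $z > \mu_k$ I apply Fact \ref{fact:chernoff} with $\epsilon = z - \mu_k \in (0, 1-\mu_k)$ to obtain $\Prob(\hatmu_{k,n} > z) \leq \Prob(\hatmu_{k,n} \geq z) \leq \e^{-d(z,\mu_k)n}$. Summing the geometric series gives $\sum_{n=1}^\infty \e^{-d(z,\mu_k)n} = 1/(\e^{d(z,\mu_k)}-1)$, and the elementary bound $\e^x - 1 > x$ for $x>0$ yields $1/(\e^{d(z,\mu_k)}-1) < 1/d(z,\mu_k)$. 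Adding the $n=0$ contribution gives $\Expect[\,\cdot\,] < 1 + 1/d(z,\mu_k)$, with the strict inequality coming from $\e^x - 1 > x$.

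The argument is largely routine; the only step requiring genuine care is the reindexing, namely verifying that each draw-count $n$ is realized on at most one draw-round of arm $k$, so that the time sum is truly dominated by $\sum_n \Ind\{\hatmu_{k,n} > z\}$ without overcounting. Once that is in hand, the remaining work is the standard Chernoff-plus-geometric-series estimate together with the convexity fact $\e^x - 1 > x$.
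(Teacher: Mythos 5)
Your proposal is correct, and it is essentially the argument the paper relies on: the paper itself does not prove this lemma but defers to \citet[Appendix B.1]{shiprafurther}, whose proof is exactly your reindexing of the time sum by the draw count $n$, followed by the Chernoff bound $\Prob(\hatmu_{k,n}\ge z)\le \e^{-d(z,\mu_k)n}$, the geometric series $\sum_{n\ge 1}\e^{-d n}=1/(\e^{d}-1)$, and the elementary bound $\e^{x}-1>x$. The only point worth noting is that Fact~\ref{fact:chernoff} formally requires $z-\mu_k\in(0,1-\mu_k)$, i.e.\ $z<1$; for $z\ge 1$ the event $\hatmu_{k,n}>z$ is vacuous, so the bound holds trivially and your argument needs no repair.
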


\begin{lemma} {\rm (Deviation of Beta posteriors)}  \label{lem_chernoffdeviation}
Let $k \in [K]$, $x_1, x_2 \in [0,1]$ be arbitrary values such that $x_1>x_2$, and $n\geq1$. Then,
\begin{multline*}
  \Prob( \theta_k(t) \geq x_1 | \hatmu_k(t) \leq x_2, N_k(t)=n) \\ \leq \exp{\left( - d(x_2, x_1) n \right)}.
\end{multline*}
\end{lemma}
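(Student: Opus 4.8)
The final statement to prove is Lemma \ref{lem_chernoffdeviation}, which bounds the probability that a Beta posterior sample exceeds $x_1$ given that the empirical mean is at most $x_2 < x_1$ after $n$ draws.

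\textbf{Plan of attack.}
The plan is to reduce the statement about the Beta posterior to a statement about a binomial tail, then apply the Chernoff bound (Fact \ref{fact:chernoff}). First I would fix the conditioning: given $N_k(t) = n$ and $\hatmu_k(t) = y$ for some $y \le x_2$, the posterior sample $\theta_k(t)$ is distributed as $\Beta(yn+1, (1-y)n+1)$, since the uniform prior updated with $yn$ successes and $(1-y)n$ failures gives these integer parameters. The key tool is the Beta-Binomial equality (Fact, ``Beta-Binomial equality''), which states $F_{\alpha,\beta}^{\mathrm{beta}}(y) = 1 - F_{\alpha+\beta-1,y}^{\mathrm{B}}(\alpha-1)$. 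Applying this with $\alpha = yn+1$, $\beta = (1-y)n+1$, and evaluating the survival function at $x_1$, I would rewrite
\[
\Prob(\theta_k(t) \ge x_1 \mid \hatmu_k(t) = y, N_k(t)=n) = F_{n+1, x_1}^{\mathrm{B}}(yn),
\]
i.e. the probability that a binomial random variable with $n+1$ trials and success probability $x_1$ takes a value at most $yn$.

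\textbf{Second step.}
I would then recognize this binomial lower tail as exactly the regime controlled by the second inequality of Fact \ref{fact:chernoff}. Writing $B \sim \mathrm{Bin}(n+1, x_1)$ with empirical mean $\hat{X} = B/(n+1)$ and true mean $\mu = x_1$, the event $\{B \le yn\}$ is contained in $\{\hat{X} \le x_2\}$ whenever $y \le x_2$, because $yn \le x_2 n \le x_2(n+1)$. The deviation $\epsilon = x_1 - x_2 \in (0, x_1)$ is positive by the hypothesis $x_1 > x_2$, so the lower-tail Chernoff bound gives $\Prob(\hat{X} \le x_1 - \epsilon) \le \exp(-d(x_2, x_1)(n+1))$. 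Since $d(x_2,x_1)(n+1) \ge d(x_2,x_1) n$, this is at most $\exp(-d(x_2,x_1)n)$, which is the claimed bound. Finally I would note that the bound holds uniformly over all $y \le x_2$, so integrating out the conditioning on the exact value of $\hatmu_k(t)$ (i.e. averaging over the event $\{\hatmu_k(t) \le x_2\}$) preserves the inequality.

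\textbf{Anticipated obstacle.}
The calculations here are routine, so the main subtlety is bookkeeping at the boundary between the Beta parameters and the binomial parameters: one must track carefully that the uniform prior contributes the ``$+1$'' to each Beta parameter, that the Beta-Binomial identity converts survival at $x_1$ into a binomial \emph{lower} tail (not an upper tail), and that the number of binomial trials is $n+1$ rather than $n$. The slack between $n+1$ and $n$ is what lets the final exponent absorb cleanly into $-d(x_2,x_1)n$; I would double-check the direction of the inequality $yn \le x_2(n+1)$ and confirm that the monotonicity of the binomial tail in its threshold lets us replace the conditioning value $y$ by its worst case $x_2$. No deep idea is needed beyond combining the two facts correctly.
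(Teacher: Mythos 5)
Your proposal is correct and follows essentially the same route as the paper: reduce the Beta survival probability to a binomial lower tail via the Beta--Binomial equality, take the worst case $\hatmu_k(t)=x_2$, and apply the Chernoff bound. The only (immaterial) difference is in absorbing the off-by-one in the number of trials: you apply Chernoff with $n+1$ trials and use $\exp(-d(x_2,x_1)(n+1))\le\exp(-d(x_2,x_1)n)$, whereas the paper first uses the monotonicity $F_{n+1,x_1}^{\mathrm{B}}(x_2 n)\le F_{n,x_1}^{\mathrm{B}}(x_2 n)$ and then applies Chernoff with $n$ trials.
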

\begin{proof}
Note that, this lemma is essentially the same as the first display in \citet[Appendix B.2]{shiprafurther}. While \citet{shiprafurther} provide a bound for $N_k(t) > n$, the bound in our lemma is for $N_k(t) = n$. For the sake of rigor, we write the proof here.
\begin{align*}
\lefteqn{
  \Prob( \theta_{j}(t) \geq x_1 | \hatmu_{j}(t) \leq x_2, N_j(t)=n)
} \\
  & = \Prob\biggl( \theta \sim \Beta(\hatmu_{j}(t) n + 1, (1-\hatmu_{j}(t)) n +1),  \nonumber\\
  & \hspace{5em} \theta \geq x_1 \biggr| \hatmu_{j}(t) \leq x_2 \biggr) \nonumber\\
  & = 1 - F_{x_2 n + 1, (1-x_2) n +1}^{\mathrm{beta}}(x_1) \nonumber\\
  & = F_{n+1,x_1}^{\mathrm{B}}(x_2 n) \nn
   & \text{\hspace{8em} (by the Beta-Binomial equality)} \nonumber\\
  & \leq F_{n,x_1}^{\mathrm{B}}(x_2 n) \leq \exp{\left( - d(x_2, x_1) n \right)}\
  \\ & \text{\hspace{8em} (by the Chernoff bound)}. \nonumber\\ 
\end{align*}
\end{proof}

\subsection{Proof of Lemma \ref{lem:fourterms}}

\label{subsec:forterms}

\textbf{Evaluation of term (A):}

\begin{proof}
Here, we prove inequality (\ref{ineq:boundterma}).
Recall that 
\begin{equation*}
  \mathrm{(A)} = \sum_{t=1}^T \Ind\{\EB^c(t)\} = \sum_{t=1}^T \Ind\{\theta^*(t) < \mu_L^{(-)}\}.
\end{equation*}
Since $\theta^*(t)$ is the $L$-th largest posterior sample among arms at round $t$, $\theta^*(t) < \mu_L^{(-)}$ implies that, there exists at least one arm in $\opts$ with its posterior sample smaller than $\mu_L^{(-)}$. Namely, 
\begin{equation*}
\{\theta^*(t) < \mu_L^{(-)} \} \subset
 \bigcup_{k \in \opts} \{\theta_k(t) < \mu_L^{(-)}\},
\end{equation*}
and therefore
\begin{align*}
\lefteqn{
\{\theta^*(t) < \mu_L^{(-)} \}
}\nn
&=  
 \bigcup_{k \in \opts} \{\theta_k(t) < \mu_L^{(-)}, \theta^*(t) < \mu_L^{(-)}\}\nn
&=  
 \bigcup_{k \in \opts} \{\theta_k(t) < \mu_L^{(-)}, \dmmax{j\in[L]}{L}\theta_j(t) < \mu_L^{(-)}\}\nn
&\subset
 \bigcup_{k \in \opts} \{\theta_k(t) < \mu_L^{(-)},
{\max_{j\in [L]\setminus \{k\}}}^{\!\!\!\!\!\!(L)}
\theta_j(t) < \mu_L^{(-)}\}.
\end{align*}
By using the union bound, we obtain
\begin{align*}
\lefteqn{
\Ind\{ \theta^*(t) < \mu_L^{(-)} \} 
}\nn
&\leq \sum_{k \in \opts} \Ind
\{\theta_k(t) < \mu_L^{(-)},
{\max_{j\in [L]\setminus \{k\}}}^{\!\!\!\!\!\!(L)}
\theta_j(t) < \mu_L^{(-)}\}.
\end{align*}
Note that the event $\max_{j\in [L]\setminus\{k\}}^{(L)}\theta_j(t)<\mu_L^{(-)}$
satisfies the condition for the event $\ES(t)$ in \eqref{ineq_lem_theta} in Lemma \ref{lem_theta_gen} with $z:=\mu_L^{(-)}$.
Therefore we obtain from Lemma \ref{lem_theta_gen} that
\begin{multline*}
\Expect\left[ \sum_{t=1}^T \Ind\{ \theta^*(t) < \mu_L^{(-)} \}  \right] \\
 = O\left(\frac{1}{(\mu_k-\mu_L^{(-)})^2}\right)
 = O\left(\frac{1}{(\mu_L-\mu_L^{(-)})^2}\right),
\end{multline*}
which concludes the proof of inequality (\ref{ineq:boundterma}).
\end{proof}

\textbf{Evaluation of term (B):}

\begin{proof}
Here, we prove inequality (\ref{ineq:boundtermb}). 
We have,
\begin{align}
\lefteqn{
\mathrm{(B)} = \sum_{t=1}^T \Ind\{\EA_i(t), \EC_i^c(t)\} 
} \nn
 & = \sum_{t=1}^T \Ind\left\{ \bigcup_{j \in \lisubopts} \{ \EA_i(t), \tilmuijsst < \nu \} \right\} \nn
 & = \sum_{t=1}^T \sum_{j \in \lisubopts} \Ind\left\{ \EA_i(t), \tilmuijsst < \nu \right\} \nn
 & = \sum_{t=1}^T \sum_{j \in \lisubopts} \nn
 &
 \hspace{-0.5em}\left\{\hspace{-0.1em} \Ind\hspace{-0.1em}\left\{ \EA_i(t), \hatmu_i(t) \hspace{-0.1em}>\hspace{-0.1em} \mu_L \right\} \hspace{-0.2em}+\hspace{-0.2em} \Ind\hspace{-0.1em}\left\{ \EA_i(t), \hatmu_i(t) \hspace{-0.1em}\leq\hspace{-0.1em} \mu_L, \tilmuijsst \hspace{-0.1em}<\hspace{-0.1em} \nu \hspace{-0.1em}\right\} \hspace{-0.2em}\right\}\hspace{-0.1em}. \label{termbhatdiv}
\end{align}
In the following, we bound the first and the second terms in the inner sum of the last line of \eqref{termbhatdiv}.
From Lemma \ref{lem_avgdeviation}, the first term of \eqref{termbhatdiv} is bounded as
\begin{align*}
 \Expect\left[ \sum_{t=1}^T \Ind\left\{ \EA_i(t), \hatmu_i(t) > \mu_L \right\} \right] \le 1 + \frac{1}{d(\mu_L, \mu_i)}  = O(1).
\end{align*}

On the other hand, the second term of \eqref{termbhatdiv} is transformed as
\begin{align*}
\lefteqn{
 \sum_{t=1}^T \Ind\left\{\EA_i(t), \hatmu_i(t) \leq \mu_L, \tilmuijsst < \nu \right\} 
 }\\ 
 & \le \frac{\log{\log{T}}}{d(\mu_L, \nu)} \nn
 & \hspace{-0.3em} + \hspace{-0.1em}\sum_{t=1}^T\hspace{-0.1em} \hspace{-0.1em}\Ind\hspace{-0.1em} \left\{\EA_i(t), N_i(t) \hspace{-0.1em}>\hspace{-0.1em} \frac{\log{\log{T}}}{d(\mu_L, \nu)}, \hatmu_i(t) \hspace{-0.1em}\leq\hspace{-0.1em} \mu_L, \tilmuijsst \hspace{-0.1em}<\hspace{-0.1em} \nu \right\}
 \\ & \leq \frac{\log{\log{T}}}{d(\mu_L, \nu)} \nn
 & +\sum_{t=1}^T  \Ind\left\{ N_i(t)  > \frac{\log{\log{T}}}{d(\mu_L, \nu)},  \hatmu_i(t) \leq  \mu_L, \tilmuijsst  < \nu \right\}.
\end{align*}

Since $\tilmuijsst$ is the $(L-1)$-th largest posterior sample among arms except for  $i$ and $j$, $\tilmuijsst < \nu$ indicates that, the number of arms excluding $i$ and $j$ with posterior samples larger than or equal to $\nu$ is at most $L-2$, and thus at least one arm among $[L-1]$ has its posterior smaller than $\nu$. Namely,
\begin{align*}
\lefteqn{
\{\tilmuijsst < \nu\}
=\{{\max_{l\in [K]\setminus \{i,j\}}}^{\!\!\!\!\! (L-1)}\tilmu_l(t)<\nu\}
}\nn
&=\bigcup_{k\in[L-1]}\{\tilmu_k(t)<\nu,{\max_{l\in [K]\setminus \{i,j\}}}^{\!\!\!\!\! (L-1)}\tilmu_l(t)<\nu\}\nn
&\subset\bigcup_{k\in[L-1]}\{\tilmu_k(t)<\nu,{\max_{l\in [K]\setminus \{i,j,k\}}}^{\!\!\!\!\!\!\!\! (L-1)}\tilmu_l(t)<\nu\}.
\end{align*}
By using this, we have
\begin{align*}
\lefteqn{
  \sum_{t=1}^T \Ind\left\{N_i(t) > \frac{\log{\log{T}}}{d(\mu_L, \nu)}, \hatmu_i(t) \leq \mu_L, \tilmuijsst < \nu \right\}
} \nn
 & \leq \sum_{t=1}^T \sum_{k\in[L-1]} \Ind\Bigl\{N_i(t) > \frac{\log{\log{T}}}{d(\mu_L, \nu)}, \hatmu_i(t) \leq \mu_L,\nn
 &\hspace{6em} \tilmu_k(t)<\nu,{\max_{l\in [K]\setminus \{i,j,k\}}}^{\!\!\!\!\!\!\!\! (L-1)}\tilmu_l(t)<\nu \Bigr\}.
\end{align*}

Moreover, let $\nu_2 = (\nu+\mu_L)/2 =(\mu_{L-1}+3\mu_L)/4$. 
For $k\in[L-1]$, $\mu_k > \nu > \nu_2 > \mu_L$ and 
\begin{align*}
\lefteqn{
 \Prob\left\{\tilmu_k(t)<\nu, N_k(t) \ge \frac{\log{T}}{2(\nu-\nu_2)^2}\right\}
} \nn
& \le \sum_{n=\frac{\log{T}}{2(\nu-\nu_2)^2}}^T \Prob\{\tilmu_k(t)<\nu, N_k(t) = n \} \nn
& \le \sum_{n=\frac{\log{T}}{2(\nu-\nu_2)^2}}^T \Prob\{\tilmu_k(t)<\nu, \hatmu_k(t) > \nu_2, N_k(t) = n \} \nn
&\hspace{2em}+ \sum_{n=\frac{\log{T}}{2(\nu-\nu_2)^2}}^T \Prob\{\hatmu_k(t) \le \nu_2, N_k(t) = n \} \nn
& \le \sum_{n=\frac{\log{T}}{2(\nu-\nu_2)^2}}^T \e^{-d(\nu_2, \nu)n} \nn
&\hspace{2em}+ \sum_{n=\frac{\log{T}}{2(\nu-\nu_2)^2}}^T \Prob\{\hatmu_k(t) \le \nu_2, N_k(t) = n \} \nn
& \hspace{15em} \text{ (by Lemma \ref{lem_chernoffdeviation})}\nn
& \le \sum_{n=\frac{\log{T}}{2(\nu-\nu_2)^2}}^T \e^{-d(\nu_2, \nu)n}  + \sum_{n=\frac{\log{T}}{2(\nu-\nu_2)^2}}^T \e^{-d(\nu_2, \mu_k) n} \nn
&\hspace{15em} \text{ (by Chernoff bound)}\nn
& = O(1/T) \text{ (by $(\mu_k-\nu_2) > (\nu-\nu_2)$ and Pinsker's inequality)}
\end{align*}
and thus
\begin{align*}
\lefteqn{
\sum_{t=1}^T \sum_{k\in[L-1]} \Prob\Bigl\{N_i(t) > \frac{\log{\log{T}}}{d(\mu_L, \nu)}, 
} \nn
 & \hspace{2em}\hatmu_i(t) \leq \mu_L, \tilmu_k(t)<\nu,{\max_{l\in [K]\setminus \{i,j,k\}}}^{\!\!\!\!\!\!\!\! (L-1)}\tilmu_l(t)<\nu \Bigr\}\nn
 & \leq \hspace{-0.1em} \sum_{t=1}^T \hspace{-0.1em} \sum_{k\in[L-1]} \hspace{-0.6em} \Prob\Bigl\{N_i(t) \hspace{-0.1em}>\hspace{-0.1em} \frac{\log{\log{T}}}{d(\mu_L, \nu)}, N_k(t) \hspace{-0.1em}<\hspace{-0.1em} \frac{\log{T}}{2(\nu-\nu_2)^2},\nn
 & \hspace{2em}\hatmu_i(t) \hspace{-0.1em}\leq\hspace{-0.1em} \mu_L, \tilmu_k(t)\hspace{-0.1em}<\hspace{-0.1em}\nu,\hspace{-0.1em}{\max_{l\in [K]\setminus \{i,j,k\}}}^{\!\!\!\!\!\!\!\! (L-1)}\tilmu_l(t)\hspace{-0.1em}<\hspace{-0.1em}\nu \Bigr\} \nn & \hspace{2em} + \sum_{t=1}^T \sum_{k\in[L-1]} \Prob\Bigl\{ \tilmu_k(t)<\nu, N_k(t) \ge \frac{\log{T}}{2(\nu-\nu_2)^2} \Bigr\}\nn
 & \leq \hspace{-0.1em} \sum_{t=1}^T \hspace{-0.1em} \sum_{k\in[L-1]} \hspace{-0.6em} \Prob\Bigl\{N_i(t) \hspace{-0.1em}>\hspace{-0.1em} \frac{\log{\log{T}}}{d(\mu_L, \nu)}, N_k(t) \hspace{-0.1em}<\hspace{-0.1em} \frac{\log{T}}{2(\nu-\nu_2)^2},\nn
 & \hspace{2em} \hatmu_i(t) \hspace{-0.1em}\leq\hspace{-0.1em} \mu_L, \tilmu_k(t)\hspace{-0.1em}<\hspace{-0.1em}\nu,\hspace{-0.1em}{\max_{l\in [K]\setminus \{i,j,k\}}}^{\!\!\!\!\!\!\!\! (L-1)}\tilmu_l(t)\hspace{-0.1em}<\hspace{-0.1em}\nu \Bigr\} \nn & \hspace{2em} + O(1).
\end{align*}

Here,
$z:=\nu$, $\ES(t):=\{\max_{l \in [K]\setminus \{i,j,k\}}^{(L-1)}\tilmu_l(t) <\nu\}$, $\ET(t) :=\{\tilmu_i(t) \le \nu\}$, and $\EU(t) := \{\hatmu_i(t) \leq \mu_L\}$
satisfy the conditions in Lemma \ref{lem_theta_gen}. Under $\EU(t)$, $\ET(t)$ holds with probability at least
\begin{equation*}
 1 - \exp{\left( - d(\mu_L, \nu) \left(\frac{\log{\log{T}}}{d(\mu_L, \nu)}\right) \right)} = 1 - (\log{T})^{-1}
\end{equation*}
by Lemma \ref{lem_chernoffdeviation}.
Therefore, by using Lemma \ref{lem_theta_gen} with $N_c = \log{T}/(2(\nu-\nu_2)^2)$, we obtain
\begin{align}
\lefteqn{
 \Expect\Biggl[ \sum_{t=1}^T \Ind\Bigl\{N_i(t) > \frac{\log{\log{T}}}{d(\mu_L, \nu)}, N_k(t)<\frac{\log{T}}{2(\nu-\nu_2)^2},
}\nn
& \hatmu_i(t) \leq \mu_L, 
 \tilmu_k(t)<\nu,{\max_{l\in [K]\setminus \{i,j,k\}}}^{\!\!\!\!\!\!\!\! (L-1)}\tilmu_l(t)<\nu  \Bigr\} \Biggr] 
 \nn
 & \le  O\left(\frac{1}{(1 - (\log{T})^{-1}) (\mu_k-\nu)^2}\right) +\nn
 & \hspace{4em} O\left(\frac{(\log{T})^{-1}}{1-(\log{T})^{-1}} \frac{\log{T}}{2(\nu-\nu_2)^2} \right) = O(1). \label{bend}
\end{align}
From \eqref{bend} and the union bound over $k \in [L-1]$, the second term of \eqref{termbhatdiv} is $O(1)$.
In summary, term (B) is $O(\log{\log{T}})$ in expectation.
\end{proof}

\textbf{Evaluation of term (C):}

\begin{proof}
Here, we prove inequality (\ref{ineq:boundtermc}). Recall that, 
\begin{equation*}
\mathrm{(C)} = \sum_{j \in \lisubopts} \sum_{t=1}^T \Ind\{\EA_i(t), \EA_j(t), \EC_i(t), \ED_i(t)\}. 
\end{equation*}

Let $\nu_2 = (\nu+\mu_L)/2 $ $=(\mu_{L-1}+3\mu_L)/4$. Note that, we defined $\nu$ and $\nu_2$ such that $\mu_{L-1} > \nu > \nu_2 > \mu_L$, $O(\mu_{L-1} - \nu) = O(\nu - \nu_2) = O(\nu_2 - \mu_L) = O(\mu_{L-1}-\mu_L) = O(1)$ as a function of $T$.
Then,
\begin{align}
\lefteqn{
\sum_{t=1}^T \Ind\{\EA_i(t), \EA_j(t), \EC_i(t), \ED_i(t)\}
} \nn
 & = \sum_{t=1}^T \Ind\{\EA_i(t), \EA_j(t), \EC_i(t), \ED_i(t), \hatmu_j(t) > \nu_2 \} \nn
 & \hspace{2em} +  \sum_{t=1}^T \Ind\{\EA_i(t), \EA_j(t), \EC_i(t), \ED_i(t), \hatmu_j(t) \leq \nu_2 \}  \nn
 & \leq \sum_{t=1}^T \Ind\{\EA_j(t), \hatmu_j(t) > \nu_2 \}\nn
 & \hspace{2em} + \sum_{t=1}^T \Ind\{\EA_i(t), \EA_j(t), \EC_i(t), \ED_i(t), \hatmu_j(t) \leq \nu_2\}.  \nn \label{ineq:c1andc2}
\end{align}

By using Lemma \ref{lem_avgdeviation} with $z:=\nu_2$, the first term in \eqref{ineq:c1andc2} is bounded as:
\begin{multline}
 \Expect\left[ \sum_{t=1}^T \Ind\{\EA_j(t), \hatmu_j(t) > \nu_2 \} \right] \leq 1 + \frac{1}{d(\nu_2, \mu_j)} \\ = O\left(\frac{1}{(\nu_2 - \mu_j)^2}\right) = O\left(\frac{1}{(\mu_{L-1}- \mu_L)^2}\right) = O(1).
\label{ineq:c2boundfst}
\end{multline}

We now bound the second term in \eqref{ineq:c1andc2}.
Let $\EC'_{i,j}(t) = \{ \thetaijsst \geq \nu \} \supset \EC_i(t)$.
Let $\EE_j(t) = \{ N_j(t) \geq \epsilon_2 \log{T} \}$. We have,
\begin{align*}
\lefteqn{
 \sum_{t=1}^T \Ind\{\EA_i(t), \EA_j(t), \EC_i(t), \ED_i(t), \hatmu_j(t) \leq \nu_2\}
} \nn
 & \le \sum_{t=1}^T \Ind\{\EA_i(t), \EA_j(t), \EC'_{i,j}(t), \ED_i(t), \hatmu_j(t) \leq \nu_2\} \nn
 & \le \epsilon_2 \log{T} \nn
 & \hspace{0.5em} + \sum_{t=1}^T \Ind\{\EA_i(t), \EA_j(t), \EC'_{i,j}(t), \ED_i(t), \hatmu_j(t) \leq \nu_2, \EE_j(t)\}. \\
 & \leq \epsilon_2 \log{T} + \sum_{n=0}^{N_i^{\mathrm{suf}}(T)-1} \sum_{t=1}^T 
 \nn & \hspace{2em} \Ind\{\EA_i(t), \EA_j(t), \EC'_{i,j}(t), N_i(t)=n, \hatmu_j(t) \leq \nu_2, \EE_j(t)\}.
\end{align*}

In the following, we bound 
\begin{equation}
  \sum_{t=1}^T \Ind\{\EA_i(t), \EA_j(t), \EC'_{i,j}(t), N_i(t)=n, \hatmu_j(t) \leq \nu_2, \EE_j(t)\}. \label{simuldraw}
\end{equation}
Note that, \eqref{simuldraw} is at most $1$ since $\{\EA_i(t), N_i(t)=n\}$ occurs at most once.
Let $\tau$ be the first round (if exists) at which $\{\EC'_{i,j}(t), \thetaijsst \leq \theta_i(t),  \EA_i(t), N_i(t)=n\}$ is satisfied. It is necessary that $\{\theta_j(\tau) \geq \thetaijss(\tau)\}$ for $\eqref{simuldraw}$ to be $1$: this is because, (i) both $\theta_i(\tau)$ and $\theta_j(\tau)$ need to be larger than $\thetaijss(\tau)$ for the simultaneous draw of arms $i$ and $j$, (ii) and if $\theta_j(\tau) < \thetaijss(\tau)$ then arm $i$ is drawn and thus $\{N_i(t)=n\}$ is never satisfied after $t>\tau$. Here,
\begin{multline*}
  \Prob \{\theta_j(\tau) \ge \thetaijss(\tau), \thetaijss(\tau) \ge \nu, \hatmu_j(\tau) \leq \nu_2\} \\ \leq \exp{\left( - d(\nu_2, \nu) N_j(\tau) \right)},
\end{multline*}
by Lemma \ref{lem_chernoffdeviation}. Therefore, we have
\begin{multline}
 \Expect\left[ \sum_{t=1}^T \Ind\{\EA_i(t), \EA_j(t), \EC_i(t), N_i(t)=n, \hatmu_j(t) \leq \nu_2\} \right]
\\
 \leq \exp{\left( - d(\nu_2, \nu) \epsilon_2 \log{T}  \right)} = T^{- \epsilon_2 d(\nu_2, \nu)}. 
\label{ineq:individualc2bound} 
\end{multline}

In summary, the second term in \eqref{ineq:c1andc2} is bounded as:
\begin{align*}
\lefteqn{
\Expect\Biggl[ \sum_{t=1}^T \Ind\{\EA_i(t), \EA_j(t), \EC_i(t), \ED_i(t), \hatmu_j(t) \leq \nu_2\} \Biggr] 
} \nn
  & \leq \epsilon_2 \log{T} + N_i^{\mathrm{suf}}(T) T ^ {- \epsilon_2 d(\nu_2, \nu) } \nonumber\\
  & \leq \left( \epsilon_2 + \frac{4 T ^ {- \epsilon_2 d(\nu_2, \nu) } }{d(\mu_i, \mu_L)} \right) \log{T} \text{\hspace{2em}(by $(1+\epsilononed)^2 < 4$)}, 
\end{align*}
and thus,
\begin{align*}
\lefteqn{
 \Expect[\mathrm{(C)}] 
} \nn
 & \leq \hspace{-2em} \sum_{j \in \lisubopts} \left( \frac{\left(\epsilon_2 +  4 T ^ {- \epsilon_2 d(\nu_2, \nu) }\right) \log{T}}{d(\mu_i, \mu_L)} \right) + O(1) \nn
 & \leq \hspace{-2em} \sum_{j \in \lisubopts} \left( \frac{\left(\epsilon_2 +  4 T ^ {- \epsilon_2 \Delta_{L,L-1}^2/8 }\right) \log{T}}{d(\mu_i, \mu_L)} \right) + O(1),
\end{align*}
where we used the fact that $d(\nu_2, \nu) \geq 2(\nu-\nu_2)^2 = 2 \times ((\mu_{L-1}-\mu_L)/4)^2$ in the last transformation.
\end{proof}

\textbf{Evaluation of term (D):}

\begin{proof}
Here, we prove inequality (\ref{ineq:boundtermd}). We first divide term (D) into two subterms as:
\begin{align}
\lefteqn{
\Expect[\mathrm{(D)}] = 
 \Expect\left[ \sum_{t=1}^T \Ind\{\EA_i(t), \EB(t), N_i(t) \geq N_i^{\mathrm{suf}}(T)\} \right]
} \nn 
 & \leq \Expect\left[ \sum_{t=1}^T \Ind\{\EA_i(t), \EB(t), \hatmu_i(t) > \mu_i^{(+)}, N_i(t) \geq N_i^{\mathrm{suf}}(T)\} \right]\nn
 &  + \Expect\left[\sum_{t=1}^T \Ind\{\EA_i(t), \EB(t), \hatmu_i(t) \leq \mu_i^{(+)}, N_i(t) \geq N_i^{\mathrm{suf}}(T)\} \right]. \label{ineq:dtwoterms}
\end{align}

On one hand, the first term in \eqref{ineq:dtwoterms} is bounded as:
\begin{align}
\lefteqn{
 \Expect\left[ \sum_{t=1}^T \hspace{-0.2em} \Ind\{\EA_i(t), \EB(t), \hatmu_i(t) > \mu_i^{(+)}\hspace{-0.5em}, N_i(t) \geq N_i^{\mathrm{suf}}(T)\} \hspace{-0.2em} \right] 
} \nonumber\\
 &  \leq \Expect\left[ \sum_{t=1}^T \Ind\{\EA_i(t), \hatmu_i(t) > \mu_i^{(+)}\} \right] \nonumber\\
  &  \leq 1 + \frac{1}{d(\mu_i^{(+)}, \mu_i)} \text{\hspace{3em} (by Lemma \ref{lem_avgdeviation}).}
\end{align}

On the other hand, each component of the second term of \eqref{ineq:dtwoterms} is bounded as
\begin{align}
\lefteqn{
\Expect\left[
\Ind[\EA_i(t), \EB(t), \hatmu_i(t) \leq \mu_i^{(+)}, N_i(t) \geq N_i^{\mathrm{suf}}(T)]\right]
} \nonumber\\
 & \leq \Expect\left[\Ind[\theta_{i}(t) \geq \mu_L^{(-)}, \hatmu_i(t) \leq \mu_i^{(+)}, N_i(t) \geq N_i^{\mathrm{suf}}(T)]\right] \nonumber\\
 & = \Expect\Big[\Expect\big[\Ind[\theta_{i}(t) \geq \mu_L^{(-)}, \hatmu_i(t) \leq \mu_i^{(+)}, N_i(t) \geq N_i^{\mathrm{suf}}(T)]\nonumber\\
&\phantom{wwwwwwwwwwwwwwwwwwwwwww}
\big| \hatmu_i(t),\,N_i(t)\big]\Big] \nonumber\\
 & \le \Expect\Big[\Expect\big[\Ind[\hatmu_i(t) \leq \mu_i^{(+)}, N_i(t) \geq N_i^{\mathrm{suf}}(T)]
\nonumber\\
&\phantom{wwwwwww}
\Prob[\theta_{i}(t) \geq \mu_L^{(-)}|\hatmu_i(t),\,N_i(t)]
\big| \hatmu_i(t),\,N_i(t)\big]\Big] \nonumber\\
& \le \Expect\left[\Expect\left[
\exp(-d(\mu_i^{(+)}, \mu_L^{(-)})N_i^{\mathrm{suf}}(T))
\Big| \hatmu_i(t),\,N_i(t)\right]\right] \nonumber\\
& \phantom{wwwwwwwwwwwwwwwww}
\text{\hspace{1em} (by Lemma \ref{lem_chernoffdeviation})} \nn
 & =\exp{( - d(\mu_i^{(+)}, \mu_L^{(-)}) N_i^{\mathrm{suf}}(T) )}\nn
 & =T^{-1} \phantom{wwwwwwww}\mbox{(by the definition of $N_i^{\mathrm{suf}}(T)$),}
\label{d2}
\end{align}
where we used the fact $\Expect[X]=\Expect[\Expect[X|Y]]$
for any random variables $X$ and $Y$.
Putting \eqref{ineq:dtwoterms}--\eqref{d2} together
we obtain
\begin{equation}
 \Expect[\mathrm{(D)}] \leq 1 + \frac{1}{d(\mu_i^{(+)}, \mu_i)} +
\sum_{t=1}^T T^{-1},
\end{equation}
from which the inequality (\ref{ineq:boundtermd}) follows.
\end{proof}

\subsection{Proof of Lemma \ref{lem:termcconvergence}}
\label{subsec:proofe2lim}

It suffices to prove that
for any $a,b>0$
\begin{equation*}
  \inf_{\epsilon_2 > 0} \left\{ \frac{ T^{-a \epsilon_2} }{ b } + \epsilon_2 \right\} = O\left(\frac{\log \log T}{\log T}\right).
\end{equation*}

By letting $\epsilon_2=(\log \log T)/(a\log T)$, we have
\begin{align*}
 \inf_{\epsilon_2 > 0} \left\{ \frac{ T^{-a \epsilon_2} }{ b } + \epsilon_2 \right\}
&=
 \inf_{\epsilon_2 > 0} \left\{ \frac{ e^{-a \epsilon_2\log T} }{ b } + \epsilon_2 \right\}
\nn
&\le
\frac{e^{-\log\log T}}{ b } + \frac{\log \log T}{a\log T}\nn
&=
\frac{1}{b\log T}+\frac{\log \log T}{a\log T}\nn
&=
O\left(\frac{\log \log T}{\log T}\right)
\end{align*}
and the proof is completed.

\end{document}